\gdef\@copyrightpermission{
  \begin{minipage}{0.2\columnwidth}
   \href{https://creativecommons.org/licenses/by/4.0/}{\includegraphics[width=0.90\textwidth]{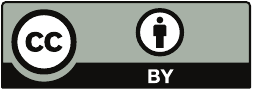}}
  \end{minipage}\hfill
  \begin{minipage}{0.8\columnwidth}
   \href{https://creativecommons.org/licenses/by/4.0/}{This work is licensed under a Creative Commons Attribution International 4.0 License.}
  \end{minipage}
  \vspace{5pt}
}
\title{Free Argumentative Exchanges for Explaining Image Classifiers}
\author{Avinash Kori}
\affiliation{
  \institution{Imperial College London}
  \country{United Kingdom}}
\email{a.kori21@imperial.ac.uk}
\author{Antonio Rago}
\affiliation{
  \institution{Imperial College London}
  \country{United Kingdom}}
\email{a.rago@imperial.ac.uk}
\author{Francesca Toni}
\affiliation{
  \institution{Imperial College London}
  \country{United Kingdom}}
\email{f.toni@imperial.ac.uk}
\begin{abstract}

Deep learning models are powerful image classifiers but their opacity hinders their trustworthiness.
Explanation methods for capturing the reasoning process within these classifiers faithfully and in a %\todo{cognitively manageable manner} 
\CR{clear manner} are scarce, due to their sheer complexity and size.
%lacking because modelling this process in a cognitively manageable manner is exceptionally difficult if not impossible. often hinders the faithfulness of explanations for their outputs.
%In this paper, we 
\CR{We} provide a solution for this problem by defining a novel method for explaining the outputs of image classifiers with debates  between two agents, each arguing for a particular class. 
%In order to produce faithful explanations of classifications, capturing all the uncertainties within a classifier, we frame this process as a debate between agents, with each arguing for a particular class.
We obtain these debates as concrete instances of \emph{Free Argumentative eXchanges} (FAXs), a novel  argumentation-based multi-agent framework
%in which  the lingua franca assumption between agents is relaxed, allowing us to model the classifier's internal ``debate'' between different classes as agents with different opinions.
allowing agents to internalise opinions by other agents differently than originally stated. 
We %identify theoretical properties of FAXs and 
define two metrics \CR{(\emph{consensus} and \emph{persuasion rate})} to assess 
the usefulness of FAXs as argumentative explanations for image classifiers.
We then conduct a number of
empirical experiments % with concrete instances of FAXs for image classification, that %these properties witness FAXs' suitability 
showing that FAXs 
perform well along these metrics as well as being more faithful to the image  classifiers than conventional, non-argumentative explanation methods.
\CR{All our implementations can be found at \url{https://github.com/koriavinash1/FAX}}.
%\delete{In our experimental analysis, we examine fair, biased, and random classifiers. We train a ResNet-18 model using both balanced and unbalanced class splits on the AFHQ dataset, which produces models with varying levels of bias.  To interpret the behavior of these trained models, we employ FAXs. Our findings indicate that characteristics such as persuasion ability and resolution representation are indicative of classifier behavior.}

\end{abstract}
\keywords{Argumentation, Explainable AI, Quantization}
\newcommand{\todo}[1]{\textcolor{magenta}{$*$ #1}}
\newcommand{\todoE}[1]{%\textcolor{blue}{$*After deadline*$ #1}
}
\newcommand{\delete}[1]{}
\newcommand{\CR}[1]{\textcolor{black}{#1}}
\def\FAXIC{FAX%$^*$
}
\DeclareMathOperator*{\argmin}{argmin}
\newtheorem{example}{Example}
\newtheorem{definition}{Definition}
\newtheorem{proposition}{Proposition}
\newcommand{\Args}{\ensuremath{\mathbf{X}}}
\newcommand{\Atts}{\ensuremath{\mathbf{A}}}
\newcommand{\Supps}{\ensuremath{\mathbf{S}}}
\newcommand{\BAF}{\ensuremath{\mathbf{B}}}
\newcommand{\QBAF}{\ensuremath{\mathbf{Q}}}
\newcommand{\argument}{\ensuremath{\alpha}}
\newcommand{\otherargument}{\ensuremath{\beta}}
\newcommand{\SF}{\ensuremath{\sigma}}
\newcommand{\Range}{\ensuremath{\mathbb{I}}}
\newcommand{\RPos}{\ensuremath{\Range_+}}
\newcommand{\RNeg}{\ensuremath{\Range_-}}
\newcommand{\indexAg}{\ensuremath{
i}} %index for whatever agent - changing this will change all the below
\newcommand{\qAgentI}{\ensuremath{
q^1}}
\newcommand{\qAgentJ}{\ensuremath{
q^2}}
\newcommand{\qAgentW}{\ensuremath{
q^{\indexAg}}}
\newcommand{\RangeW}{\ensuremath{
\Range^{\indexAg}}}
\newcommand{\BAFW}{\ensuremath{
\BAF^{\indexAg}}}
\newcommand{\SFW}{\ensuremath{
\SF^{\indexAg}}}
\newcommand{\ArgsW}{\ensuremath{\Args^{\indexAg}}}
\newcommand{\AttsW}{\ensuremath{\Atts^\indexAg}}
\newcommand{\SuppsW}{\ensuremath{\Supps^\indexAg}}
\newcommand{\AgentI}{\ensuremath{
\Agents^i}}
\newcommand{\Stance}{\ensuremath{\Sigma}}
\newcommand{\Exch}{\ensuremath{x}}
\newcommand{\argpaths}{\mathsf{paths}}
\newcommand{\SpeakerM}{\ensuremath{\mathcal{M}}}
\newcommand{\Agents}{\ensuremath{\mathcal{%P
A}}}
\newcommand{\AGi}{\ensuremath{i}}
\newcommand{\BAFi}{\ensuremath{\BAF^\AGi}}
\newcommand{\ArgsI}{\ensuremath{\Args^\AGi}}
\newcommand{\AttsI}{\ensuremath{\Atts^\AGi}}
\newcommand{\SuppsI}{\ensuremath{\Supps^\AGi}}
\newcommand{\SFi}{\ensuremath{\SF^\AGi}}
\newcommand{\RangeI}{\ensuremath{\Range^\AGi}}
\newcommand{\RPosI}{\ensuremath{\RPos^\AGi}}
\newcommand{\RNegI}{\ensuremath{\RNeg^\AGi}}
\newcommand{\StanceI}{\ensuremath{\Stance^\AGi}}
\newcommand{\reward}{\ensuremath{r}}
\newcommand{\rewardI}{\ensuremath{\reward^\AGi}}
\newcommand{\BAFx}{\ensuremath{\BAF^\Exch}}
\newcommand{\ArgsX}{\ensuremath{\Args^\Exch}}
\newcommand{\AttsX}{\ensuremath{\Atts^\Exch}}
\newcommand{\SuppsX}{\ensuremath{\Supps^\Exch}}
\newcommand{\arge}{\ensuremath{e}}
\newcommand{\arga}{\ensuremath{a}}
\newcommand{\argb}{\ensuremath{b}}
\newcommand{\argc}{\ensuremath{c}}
\newcommand{\argx}{\ensuremath{x}}
\newcommand{\argy}{\ensuremath{y}}
\newcommand{\BibTeX}{\rm B\kern-.05em{\sc i\kern-.025em b}\kern-.08em\TeX}
\begin{document}

%%% The following commands remove the headers in your paper. For final 
%%% papers, these will be inserted during the pagination process.

\pagestyle{fancy}
\fancyhead{}

%%% The next command prints the information defined in the preamble.

\maketitle 

%%%%%%%%%%%%%%%%%%%%%%%%%%%%%%%%%%%%%%%%%%%%%%%%%%%%%%%%%%%%%%%%%%%%%%%%

%\todo{We will clarify this in the discussion to set the expectation correctly, starting from the abstract (where the mention of cognitive tractability will be dropped and considerations on the new metrics we propose expanded).}

\section{Introduction}

With the increasing complexity and widespread deployment of deep learning models in our daily lives, the interpretation and explanation of these models' decisions have become a central focus in recent eXplainable Artificial Intelligence (XAI) literature \cite{lime,SHAP,deeplift}. 
Many existing approaches for explaining image classification models heavily rely on heatmaps and segments to localize regions of interest in input images that contribute to the model's output \cite{selvaraju2017grad,gradcampp,integratedcam}, %these methods 
typically offering static input-output-based explanations while lacking deeper insights into the underlying model being explained.
The literature has repeatedly highlighted the need for deeper and more dynamic explanations \cite{miller2019explanation,lakkaraju2022rethinking,madumal2019grounded},  
%Ideally, explanations should not only 
highlighting the input-output relationships of the model but also delving into its internal mechanisms and elucidating %implicit biases and shortcuts employed by 
the model's reasoning. Also, there is an ongoing debate about the necessity of interactive %, socially aware
explanations \cite{cawsey1991generating,miller2019explanation} and explanations that are contrastive and selected \cite{miller2019explanation}.

Dialogue-based explanations have been advocated as being useful in understanding the inner working of deep learning models \cite{lakkaraju2022rethinking}%, while several forms of interactive machine learning for model debugging \cite{teso2023leveraging}
.
\cite{wang2019deliberative} argues that explanations are especially important when the model has high uncertainty about the output when the prediction oscillates between different classes resulting in different interpretations of the model behaviour.
\cite{Rago_23} proposes %an argumentation framework 
argumentative exchanges to explain models via interactions amongst agents.
%with human and agent interactions to explain the underlying classifier models.
Motivated by these varied lines of work, the main focus of this work is to extract explanations for image classifiers as %a visual conversations 
debates between two artificial agents, arguing, in the spirit of bipolar argumentation~\cite{Cayrol_05}, for and against %particular interpretations of the model
the classifiers' outputs for given inputs.
%\todo{motivate argumentation} The objective is to gauge the model properties based on these conversation records.

\begin{example}%(Debate Intuition) 
\label{ex:illu} 
As an abstract illustration, 
    consider two agents $\Agents^1$ and $\Agents^2$ as outlined in Figure \ref{fig:example} (black and grey, respectively). 
    Each agent has an initial  perception (at timestep $t=0$) of their environment as a bipolar argumentation framework (BAF) about a topic of interest ($a$ in the figure) which we consider as private. 
    As agents start debating, they share  their knowledge
    (see the exchange BAF figure \ref{fig:example}) and may expand their private BAFs (e.g. $\Agents^2$ learns that $b$ supports $a$ at timestep $t=1$ and agent $\Agents^1$ learns that $c$ attacks $a$ at timestep $t=2$). As in human debates,
    as agents expand their knowledge they may see things differently from the other agents (e.g. at the alternative timestep $t=2'$ $\Agents^1$ sees $\Agents^2s$'s attack  from $c$ to $a$ as a support). 
\end{example}

Overall,  we make the following contributions:

\begin{itemize}
    \item we define a novel form of \emph{free argumentative exchanges} (FAXs) to characterise explanations amongst agents as illustrated in Figure~\ref{fig:example}; differently from %the approach in 
    \cite{Rago_23}, these exchanges allow for agents to disagree on what constitutes an attack or support amongst arguments exchanged during debates (and are thus \emph{free});
    \CR{%We would like to stress that the element of novelty FAXs bring is significant in our opinion. Technically, the novelty amounts to dropping 1) the need for a base score (using BAFs rather than QBAFs for each agent) and 2) the requirement that agents agree on a lingua franca for relations. Practically, 
    this technical novelty empowers the use of \FAXIC s for explanation of image classifiers; }
    \item we instantiate FAXs so that they can serve as the basis for explaining the outputs of image classifiers; %additionally, we provide a methodological implementation for the same
   \item we provide an implementation of the instantiated FAXs, by adapting the methodology of \cite{kori2022explaining} to allow agents to generate their own arguments;
    \item we evaluate our methodology and implementation quantitatively  and qualitatively, with two types of image classifiers on two datasets;
    for the quantitative evaluation, we use two novel metrics %\todo{are they not already in \cite{DS-visual debates}?} 
    to assess the argumentative quality of the generated debates, and, for comparison with baselines% methods
    , two existing metrics (adapted to our setting) for ascertaining the %input-output
    faithfulness of explanations to the explained classifiers.
\end{itemize}

%%%%%%%%%%%%%%%%%%%%%%%%%%%%%%%%%%%%%%%%%%%%%%%%%%%%%%%%%%%%%%%%%%%%%%%%

\section{Related Work}
\label{sec:related_Work}
\begin{figure}
    \centering  \includegraphics[width=1\columnwidth]{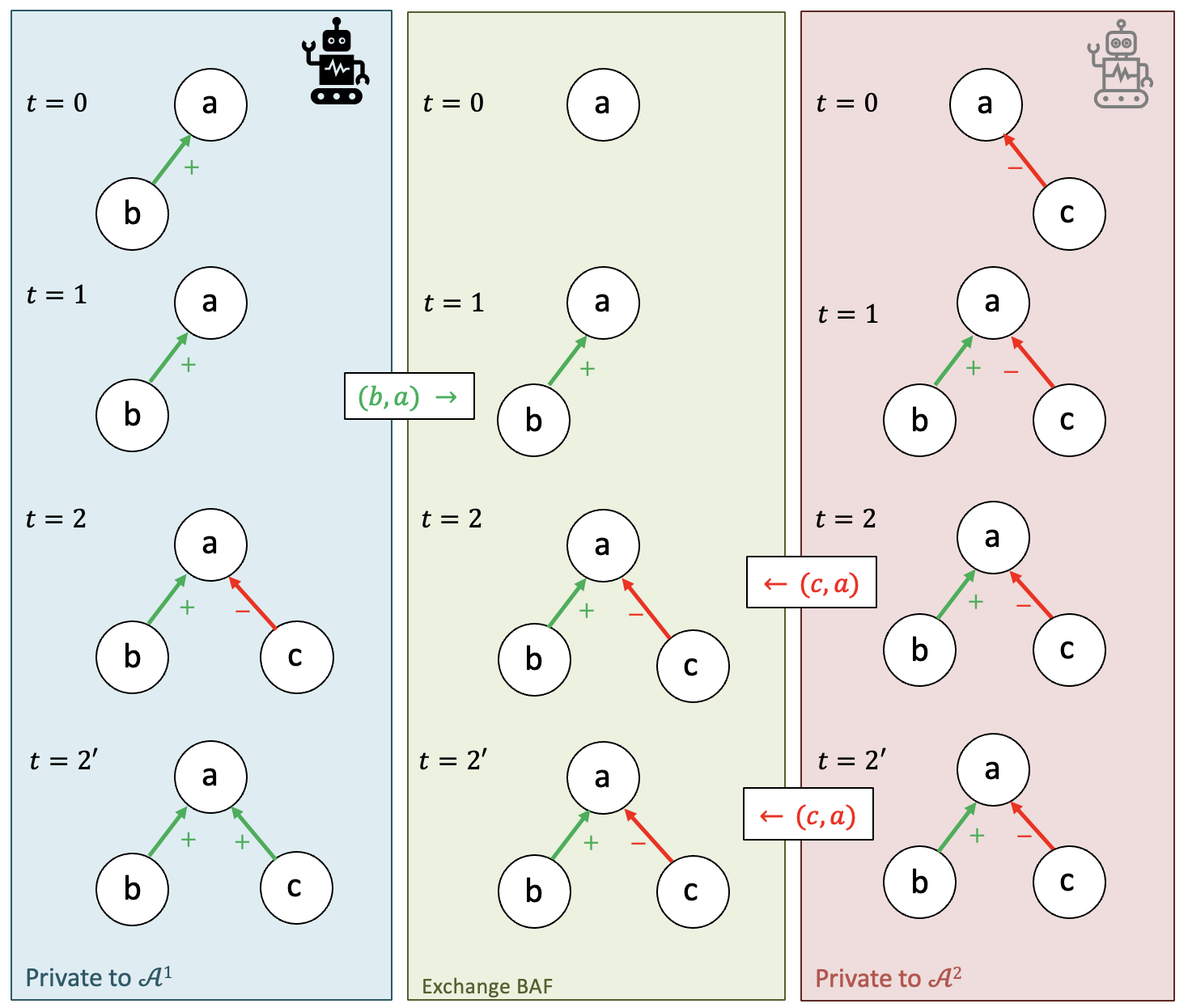}
    \caption{
    Illustration of FAXs (see Example~\ref{ex:illu}). }
    \label{fig:example}
    \vspace{-10pt}
\end{figure}

\paragraph{XAI methods} 
There has been a recent surge in methods advocating for a shift in how we perceive explanations, emphasizing the importance of viewing them as dialogues rather than solely relying on heatmaps or feature attributions, as %often seen 
standard in much of the XAI literature~\cite{miller2019explanation,madumal2019grounded,lakkaraju2022rethinking}. 
%Additionally, explanations in the form of dialogues enable us to address various model ambiguities, such as model certainties and biases.
%From a practical standpoint, 
Within the multi-agent setting,
\cite{Rago_23} proposed an argumentative framework for expressing (interactive) explanations in the form of dialogues. 
Additionally, \cite{debate} demonstrated a debate framework, which was further expanded in \cite{kori2022explaining} to scale, allowing for the extraction of post-hoc explanations in the form of dialogues between two fictional %players
agents.
Inspired by \cite{Rago_23}, we define a multi-agent argumentative framework for explanation, 
and we adopt a variant of approach in \cite{kori2022explaining} to implement the framework.
Both \cite{kori2022explaining} and our implementation utilize a surrogate model that faithfully represents the given classifier. 
The use of surrogate models is a standard practice in XAI, as seen e.g. in \cite{yoon2018invase,goyal2019counterfactual,glance}.

%Similar to the approach proposed by \cite{kori2022explaining}, our method can be considered to fall within the general family of feature attribution explanations. 
The agents in our explanations put forward arguments which can be understood as feature attributions such as 
LIME~\cite{lime}.
%In which, \cite{lime} can be viewed as a special instance of our framework. 
However, unlike \cite{lime}, we do not randomly select input regions to mask; instead, %we 
our agents learn two different strategies to select regions to argue for and against a particular explanandum (input-output).

Our method is also related to
%Some interesting work by
\cite{shitole2021one}, which argues against the rigidity of static and shallow explanations and introduces a new method for compactly visualizing how different combinations of regions in images impact the confidence of a classifier. 
%Another notable work, 
Also, \cite{wang2019deliberative} aims to encourage the capturing of uncertain image regions, 
%However, 
while \cite{wang2019deliberative} focus on statically capturing ambiguities in an image with respect to the given classifier, we generate both certain and uncertain regions through %player
agent interactions in an iterative fashion \cite{Thauvin_24}.

\paragraph{Argumentation methods}
%Another avenue of 
Some research in XAI explores the use of computational argumentation \cite{vcyras2021argumentative}. 
%Computational argumentation 
This typically aims to assess specific claims by considering arguments that %both
support and/or challenge the claim, as well as their relations within %specific 
argumentative frameworks (AFs). 
These AFs 
%are often modeled based on principles advocated in works such as 
may be as in \cite{dung1995acceptability} or %in Bipolar Argumentation Frameworks 
Bipolar AFs (BAFs).
%At a broad level, 
Broadly, with our XAI 
approach 
\iffalse leads to debates that can be viewed as %simplified instances of AFs
simple BAFs; however, conducting a comprehensive analysis of these AFs' properties is beyond the scope of this paper. 
Instead, 
\fi 
we delve into a relatively unexplored area: explaining image classifiers through debates amounting to %(simple)
BAFs, which involve interactive gameplay among learning %players
agents. 
Other approaches employing AFs for explainable image classification either utilize intrinsically argumentative models, e.g. as in \cite{hamed}, or mirror the mechanics of the model itself, e.g. as seen in  \cite{Purin21}. 
In contrast, our approach focuses on explaining classifiers using latent features through (free) argumentative exchanges.
%\todo[inline]{add discussion of \cite{DS-visual debates}?} NO AS NOT ARG PROPER....

%\href{https://doi.org/10.1145/3539618.3591917}{related but probably not useful} 
%are we performing a form of \href{https://proceedings.mlr.press/v202/aher23a/aher23a.pdf}{Turing Experiment} for RL as they introduce?
%
%\href{https://www.ijcai.org/proceedings/2023/48}{an option} for providing depth with training examples

%%%%%%%%%%%%%%%%%%%%%%%%%%%%%%%%%%%%%%%%%%%%%%%%%%%%%%%%%%%%%%%%%%%%%%%%

\section{Preliminaries}
\label{sec:prelim}

\subsection{Computational Argumentation Background}

We use (BAFs) \cite{Cayrol_05}, i.e. triples $\langle \Args, \Atts, \Supps \rangle$ such that $\Args$ is a finite set of \emph{arguments}, $\Atts \subseteq \Args \times \Args$ is an \emph{attack} relation and $\Supps \subseteq \Args \times \Args$
is a \emph{support} relation.
For all BAFs in this paper, we assume that $\Atts \!\cap \! \Supps \!= \!\emptyset$.
%Given a BAF   $\langle \Args, \Atts, \Supps \rangle$, for 
For 
any $\argument \!\in \! \Args$,  $\Atts(\argument)\!=\!\{ \otherargument \!\in \!\Args | (\otherargument, \argument) \!\in \! \Atts \}$ are the \emph{attackers}  of $\argument$ and  $\Supps(\argument)\!=\!\{ \otherargument \!\in \! \Args | (\otherargument, \argument) \!\in \!\Supps \}$ are the \emph{supporters} of $\argument$.

We use the following notation from \cite{Rago_23}:
given BAFs $\BAF\!=\!\langle \Args, \Atts, \Supps \rangle$, $\BAF'\!=\!\langle \Args', \Atts', \Supps' \rangle$, we say that $\BAF \sqsubseteq \BAF'$ iff $\Args\subseteq \Args'$,
$\Atts\subseteq \Atts'$ and
$\Supps\subseteq \Supps'$; 
also, we use $\BAF' \setminus \BAF$ to denote $\langle \Args' \setminus \Args, \Atts' \setminus \Atts, \Supps' \setminus \Supps \rangle$.
%Trivially, we 
We also say that $\BAF = \BAF'$ iff $\BAF \sqsubseteq \BAF'$ and $\BAF' \sqsubseteq \BAF$, and $\BAF \sqsubset \BAF'$ iff $\BAF \sqsubseteq \BAF'$ but $\BAF \neq \BAF'$. 

As conventional in the literature \cite{Baroni_19}, a BAF $\langle \Args, \Atts, \Supps \rangle$ may be equipped with
\emph{gradual semantics}
$\SF: \Args \rightarrow \Range$
assigning to arguments $\argument\in \Args$ values in 
$\Range$,
which is some set equipped with a pre-order $\leq$ (where, as usual $v < w$ denotes $v \leq w$ and $w \nleq v$).
In line with \cite{Rago_23}, we refer to $\SF$ as \emph{evaluation method} and to $\Range$ as \emph{evaluation range}, to indicate their use by agents
to evaluate arguments internally.
We say %that an evaluation method 
$\SF$ is \emph{dialectically monotonic} iff, %following a restricted form of monotony 
as in \cite{Baroni_19}: 
\begin{itemize}
    \item given two BAFs $\BAF = \langle \Args, \Atts, \Supps \rangle$ and $\BAF' = \langle \Args', \Atts', \Supps' \rangle$, where $\Args' = \Args \cup \{ \arga \}$, $\Atts' \cup \Supps' = \Atts \cup \Supps \cup \{ (\arga, \argb) \}$, 
    it is always the case that if $(\arga, \argb) \in \Atts'$, then $\SF(\BAF',\argb) \leq \SF(\BAF,\argb)$, while if $(\arga, \argb) \in \Supps'$, then $\SF(\BAF',\argb) \geq \SF(\BAF,\argb)$; and
    %
    %\item \delete{given two BAFs $\BAF = \langle \Args, \Atts, \Supps \rangle$ and $\BAF' = \langle \Args', \Atts', \Supps' \rangle$, where $\Args' = \Args \cup \{ \arga \}$, $\Atts' = \Atts \cup \{ (\arga, \argb) \}$ and $\Supps' = \Supps$, it is always the case that $\SF(\BAF',\argb) \leq \SF(\BAF,\argb)$; and}
    %
    %\item \delete{given two BAFs $\BAF = \langle \Args, \Atts, \Supps \rangle$ and $\BAF' = \langle \Args', \Atts', \Supps' \rangle$, where $\Args' = \Args \cup \{ \arga \}$, $\Atts' = \Atts$ and $\Supps' = \Supps \cup \{ (\arga, \argb) \}$, it is always the case that $\SF(\BAF',\argb) \geq \SF(\BAF,\argb)$; and}
    %
    \item given two BAFs $\BAF = \langle \Args, \Atts, \Supps \rangle$ and $\BAF' = \langle \Args', \Atts', \Supps' \rangle$, where for an argument $\arga \in \Args \cap \Args'$, $\Atts'(\arga) = \Atts(\arga)$, $\Supps'(\arga) = \Supps(\arga)$, $\exists \argb \in \Atts'(\arga) \cup \Supps'(\arga)$ such that $\SF(\BAF',\argb) > \SF(\BAF,\argb)$ and $\forall \argc \in \Atts'(\arga) \cup \Supps'(\arga) \setminus \{ \argb \}$, $\SF(\BAF',\argc) = \SF(\BAF,\argc)$
    it is always the case that if $\argb \in \Atts'(\arga)$, then $\SF(\BAF',\arga) \leq \SF(\BAF,\arga)$, while if $\argb \in \Supps'(\arga)$, then $\SF(\BAF',\arga) \geq \SF(\BAF,\arga)$.
\end{itemize}

\CR{The first bullet states that an argument’s strength cannot increase/decrease when a new attacker against/supporter for (respectively) the argument is added, all else being equal; the second bullet states that an argument’s strength cannot increase/decrease when an attacker against/supporter for (respectively) the argument is strengthened, all else being equal.}

Given a BAF $\BAF \!\!=\!\! \langle \Args, \Atts, \Supps \rangle$, for %any arguments 
$\arga, \argb \!\in \! \Args$,
we let a \emph{path} from $\arga$ to $\argb$ be defined as $(\argc_0,\argc_1), \ldots, (\argc_{n-1}, \argc_{n})$ for some $n\!>\!0$ (%i.e. 
the \emph{length} of the path) where $\argc_0 = \arga$, $\argc_n = \argb$ and, for any $1 \leq i \leq n$, $(\argc_{i-1}, \argc_{i}) \in \Atts \cup \Supps$. We also use $\argpaths(\arga,\argb)$ to denote the set of all paths from $\arga$ to $\argb$ (leaving the BAF implicit), and use $|p|$ for the length of path $p$. Also, we may see paths as sets of pairs. 
Then, 
as in \cite{Tarle_22,Rago_23},
we say that $\BAF$ is a \emph{BAF for explanandum $\arge\in \Args$} iff i.) $\nexists (\arge,\arga) \in \Atts \cup \Supps$; ii.) $\forall \arga \in \Args \setminus \{\arge\}$, there is a path from $\arga$ to $\arge$; and 
iii.) $\nexists \arga \!\in\! \Args$ with a path from $\arga$ to $\arga$.

\subsection{Image Classification Set-up}

Consider a dataset $\mathcal{D} \subseteq \mathcal{X} \times \mathcal{Y}$, where $\mathcal{X} \in \mathbb{R}^{h \times w \times c}$ represents a vector space with dimensions $h \times w$ corresponding to an image and $c$ channels $(c \geq 1)$. 
The label space $\mathcal{Y} = {1, \dots, N}$ consists of $N \geq 2$ classes.
Following conventions in image classification \cite{huang2017densely,he2016deep}, we consider an image classifier trained on $\mathcal{D}$, comprising a feature extractor $f: \mathcal{X} \rightarrow \mathcal{Z}$ and a feature classifier $g:\mathcal{Z} \rightarrow \mathcal{Y}$. 
The feature extractor $f$ maps the observational space $\mathcal{X}$ to a continuous latent space $\mathcal{Z} \subseteq \mathbb{R}^{m\times d}$, where each element of $\mathcal{Z}$ represents a set of latent features with $m$ elements, each of length $d$.
Given an input $x \in \mathcal{X}$, the image classifier orders the classes in $\mathcal{Y}$ based on $g(f(x))$, predicting the top-class $y$ as the output (with an abuse of notation we often write $g(f(x))=y$).
Consistent with prior works \cite{van2017neural,kori2022explaining}, we assume the existence of a set $\mathcal{Z}^i\subset \mathcal{Z}$ of features associated with each class $i \in \mathcal{Y}$. 
These are referred to as the \emph{class-$i$-specific features}. Up to Section~\ref{sec:set-up}, we defer discussion on how these class-specific features are obtained.

 % We assume that each $\mathcal{Z}^i$ is a finite set $\{z^i_1, \ldots, z^i_{M^i}\}$, with $z^i_j \in \mathbb{R}^d$. 
 %\todo{we are mixing sets and vectors....to be discussed...\AR{it looks okay to me, $\mathcal{Z}^i$ and $\mathcal{Z}$ are sets of vectors}}
 %
%We also assume, for each class $j \in \mathcal{Y}$,  the existence of classifiers $q^i_j$ mapping from  $2^{\mathcal{Z}^i \cup \mathcal{Z}^j}$ to probabilities over $\mathcal{Y}$. We refer to $q^i_j$ as the \emph{class-$i$-specific classifier wrt class $j$}. 
%

%\todo{add a simple example - an image 2x2 with 2 channels (corresponding to two colours)? each cell could have a dot of one of two colours;  with two? latent features? and two classes? e.g. class 1 if pixels in cell (11) and (22) are the same colour? PROBABLY NO TIME}

% \delete{Each player maps a set of  latent representation $z' \in \mathcal{Z}$ to their \emph{private} set of discrete features $z^i$ and $z=\{z^1, \dots, z^K \}$ be a set of private features of all the players.  \todo{this isn't clear to me}.\footnote{In later sections, we discuss on how to extract these private features.}
% Then, let $h=\{h^1, \dots h^K\}$ and $q=\{q^1, \dots, q^K\}$ be the a \emph{context} information set encoding the state of the AXs and an agent specific classifier respectively, where $h^i, q^i$ is private to agent $\AgentI$.
% }

\section{Agents and 
%Explanatory Exchanges
Explanations}

Agents are represented as \emph{private triples for explananda}, a notion adapted from \cite{Rago_23}, as follows.

\begin{definition}
\label{def:agent}
An agent $\AgentI$ is a \emph{private triple for an explanandum $\arge$}, amounting to
 $(\RangeI,\BAFi,\SFi)$ where: 
    \begin{itemize}
        \item  $\RangeI = \RPosI \cup \RNegI$ is an evaluation range, referred to as $\AgentI$'s \emph{private 
        evaluation range},
         where $\RPosI$ (referred to as \emph{positive evaluations})
         and $\RNegI$ (referred to as \emph{negative evaluations}) are disjoint and for any $v_+ \in \RPosI$ and $v_- \in \RNegI$, $v_+ > v_-$;
        \item  $\BAFi \!=\! \langle \ArgsI, \AttsI, \SuppsI \rangle$ is a BAF for $\arge$, referred to as $\AgentI$'s \emph{private BAF};
        \item  $\SFi$  is an evaluation method, referred to as $\AgentI$'s \emph{private 
        evaluation method}, such that, for any BAF $\BAF = \langle \Args, \Atts, \Supps \rangle$ and, for any $\arga \in \Args$,
         $\SFi(\BAF,\arga) \in \RangeI$.
    \end{itemize}
\end{definition}

%Differently from \cite{Rago_23}, 
Here, we use BAFs instead of quantitative BAFs~\cite{Baroni_18} as in \cite{Rago_23}, and rely upon evaluation ranges split into two, rather than three, partitions as in \cite{Rago_23} (so we disregard the neutral partition in \cite{Rago_23}). 
\iffalse Note however that these BAFs and their evaluation method may be seen as abstractions of QBAFs under suitable semantics% applied and we make this modification to simplify notation only
.
\fi
%\todo{note that BAFs can be seens as abstractions of qbafs, once we have the semntics all cooked up....}
The threshold between the two partitions is seen as the point where an agent ``changes their mind'' on the explanandum, and may correspond to a classifier's decision boundary, as we will see later.
We assume, for the remainder of this section, that any evaluation method is dialectically monotonic (as defined in Section \ref{sec:prelim}).
\CR{We exemplify our notion of an agent in a simple, generic setting below (see Section~\ref{sec:EX-IC} for instantiations for image classification).}

\begin{example}
\label{ex:agents}
    An agent $\Agents^1$ is a private triple for an explanandum $a$, amounting to $(\Range^1,\BAF^1,\SF^1)$ where: $\Range^1 = [0,1]$ with $\RNeg^1 = [0,0.6[$ and $\RPos^1 = [0.6,1]$; $\BAF^1 = \langle \Args^1, \Atts^1, \Supps^1 \rangle$ such that $\Args^1 = \{ a, b \}$ with arguments
    $a$: \emph{we should eat at this pizzeria} and
    $b$: \emph{it is highly recommmended}, 
    $\Supps^1(a) = \{ (b, a) \}$ and 
    $\Atts^1(a) = \emptyset$; and $\SF^1$ is some dialectically monotonic semantics, which in this case could give, for example $\SF^1(\BAF^1,a)= 0.75$ and $\SF^1(\BAF^1,b)=0.5$ (given the asymmetric set-up of $a$'s attackers and supporters). 
    Here we can see that $\Agents^1$'s positive reasoning for the explanandum overcomes the (absent) negative reasoning against it, resulting in its strength being above the threshold of $0.6$ and thus gives a positive evaluation.
\end{example}

We define a novel form of \emph{argumentation exchanges} amongst agents% as follows (we will see later how these exchanges can serve as explanations)
, which will serve, later, as explanations:\footnote{ %Here and throughout the paper
Throughout, we denote with $[k]$ the set $\{0,1,\ldots,k\}$, with $]k]$ %the set 
$\{1,\ldots,k\}$, 
%\todo{with $[k[$?  $]k[$?}
etc.}

\begin{definition}
\label{def:FAXs}
    A \emph{free argumentative exchange (FAX) for an explanandum $\arge$ amongst agents} $\Agents$, where $|\Agents|=m \geq 2$ and each agent in $\Agents$ is a private triple for $\arge$, is a tuple:
    $$\langle \BAFx_0, \ldots, \BAFx_{n}, \Agents_0, \ldots,\Agents_n, \SpeakerM 
    \rangle \quad (n \geq 0)$$ 
    where, for all %$0 \leq t \leq n$
    timesteps $t \in [n]$, $\BAFx_t$ (referred to as the \emph{exchange BAF at step} $t$) is a BAF for $\arge$ and $\Agents_t$ is a set of $k$ agents, all private triples for  $\arge$, such that:
    \begin{itemize}
        \item  $\BAFx_0 = \langle \ArgsX_0, \AttsX_0, \SuppsX_0 \rangle$ is a BAF such that: 
        \begin{itemize}
            \item $\ArgsX_0 = \{ \arge \}$;
            %    
            %\item 
            $\AttsX_0 = \emptyset$;
            %
            %\item 
            $\SuppsX_0 = \emptyset$;
           
        \end{itemize}
         \item $\Agents_0=\Agents$;
    \end{itemize} 
    and, at timestep %$0 <t \leq n$
    $t \in ]n]$, letting
        $\BAFx_* = \langle \ArgsX_*, \AttsX_*, \SuppsX_* \rangle = \BAFx_t \setminus \BAFx_{t-1}$:
    \begin{itemize}

        \item $\BAFx_{t} \sqsupseteq \BAFx_{t-1}$, where $\forall (a,b) \in \AttsX_*$, $\exists j \in ]m]$ such that $(a,b) \in \Atts^j_{t-1}$ and $\forall (c,d) \in \SuppsX_*$, $\exists k \in ]m]$ such that $(c,d) \in \Supps^k_{t-1}$;
        
        \item  $\Agents_t$ is a set of private triples $(\Range^i,\BAF^i_t,\SF^i)$ for $\arge$, one for each  %agent $\Agents^i \in \Agents$
        $i\in%\{1,\ldots,k\}
        ]m]$, where  $\BAFi_{t} \sqsupseteq \BAFi_{t-1}$ and $\ArgsX_* \subseteq \ArgsI_t$ and $\AttsX_* \cup \SuppsX_* \subseteq \AttsI_t \cup \SuppsI_t$; 
        %\todo{FT: should all  $\supseteq$ not be $\subseteq$? $=$? let us discuss, I am unclear... \AR{here we are just saying that all agents learn everything that is shared?}}
          %\todo{we removed the $t$ subscript in the AXIC work from $\RangeI$ and $\SFi$ since it's neater I think... }
    \end{itemize}
    $\SpeakerM$%, referred to as the \emph{contributor mapping}, is a mapping 
    \CR{is the \emph{contributor mapping},} such that, for  $\BAFx_n = \langle \ArgsX_n, \AttsX_n, \SuppsX_n \rangle$, for every $(\arga,\argb) \in \AttsX_{n} \cup \SuppsX_{n}$: $\SpeakerM((\arga,\argb))=(%\Agents^
    i,t)$ 
    %\todo{in AXs we have the agent being returned from this, not the index?} \todo{but this is a bit improper there...and here an agent is a triple, the index is the name of the agent....} 
    with %$0 < t \leq n$ 
    $i \in ]m]$ and  %$\Agents^i \in \Agents$
     $t \in ]n]$.
    %\footnote{\AR{We deviate from the notation in \cite{Rago_23} by referring to the agent's index, rather than the agent itself, in the codomain of $\SpeakerM$.\todo{why do we need to say this? we deviate about other small things and we do not make a fuss} \AR{I wasn't sure, happy to remove if you want}}}
\end{definition}
FAXs thus allow agents to add attacks or supports from their private to the exchange BAF, which are then incorporated to all other agents' frameworks as some form of relation. Note that it is only the BAFs which change with the timestep, not, for example, the evaluation ranges or methods.
Intuitively, the contributor mapping returns, for each attack/support pair in the final exchange BAF,  the agent who contributes the pair as well as the timestep at which the pair was contributed. Note that,  by defining $\SpeakerM$ as a mapping, we impose that there is a single contributor and timestep for each attack/support pair in the final exchange BAF, and therefore pairs cannot be introduced multiple times in FAXs%, in keeping with the choice of shared BAFs as \emph{BAFs for the explanandum} NOT SURE WHY THIS PLAYS A ROLE....
.
Note that FAXs are 
variants of argumentative exchanges (AXs) in \cite{Rago_23}:
whereas in AXs agents are equipped with quantitative BAFs, in FAXs they are equipped with BAFs;  
%\todo{eval range and method do not change over time in EXs, but they may in AXs. \AR{no, they do not in AXs either, we just included the notation there, which is thus redundant}} 
and, %giving their ``free'' nature, 
whereas in AXs agents are assumed to share a \emph{lingua franca} of arguments' attackers and supporters  (so that if an argument attacks or supports another for an agent it does so for all others), in FAXs attackers/supporters for an agent may be supporters/attackers for another% (but still they understand each others' arguments)
, witnessing the `free'' nature of FAXs.
%The following example illustrates
Next, we illustrate \CR{(in the earlier simple, generic setting)} why this may be useful.

\begin{example}
\label{ex:FAXs}
    Continuing from Example \ref{ex:agents}, a second agent 
    $\Agents^2$ is a private triple for $a$, amounting to $(\Range^2,\BAF^2,\SF^2)$ where: $\Range^2 = \Range^1$; $\BAF^2 = \langle \Args^2, \Atts^2, \Supps^2 \rangle$ such that $\Args^2 = \{ a, c \}$ with argument $c$: \emph{there is pineapple on the pizza},  $\Atts^2(a) = \{ (c, a) \}$ and
    $\Supps^2(a) = \emptyset$; 
    and $\SF^2$ is some dialectically monotonic semantics, which in this case could give, %for example 
    e.g., $\SF^2(\BAF^2,a)\!=\! 0.25$ and $\SF^2(\BAF^2,c)\!=\!0.5$ (again given the asymmetric set-up of $a$'s attackers and supporters).
    Here %we can see that 
    $\Agents^2$'s negative reasoning for the explanandum overcomes the (absent) positive reasoning against it, resulting in its strength being below the threshold of $0.6$ and thus gives a negative evaluation.
    A FAX for $a$ amongst agents $\Agents = \{ \Agents^1, \Agents^2 \}$ is then a tuple
    $\langle \BAFx_0, \BAFx_1, \BAFx_2, \Agents_0, \Agents_1, \Agents_2, \SpeakerM 
    \rangle$ such that:
    \begin{itemize}
        \item $\BAFx_{1} \setminus \BAFx_{0} = \langle \{ b \}, \emptyset, \{ (b, a) \} \rangle$, where $\SpeakerM((b,a)) = (1, 1)$, and thus $\Agents_1 \neq \Agents_0$, where $\Agents_1^1 = \Agents_0^1$ and $\Agents_1^2 \neq \Agents_0^2$ with $\BAF^2_{1} \setminus \BAF^2_{0} = \langle \{ b \}, \emptyset, \{ (b, a) \} \rangle $;
        \item $\BAFx_{2} \setminus \BAFx_{1} = \langle \{ c \}, \{ (c, a) \}, \emptyset \rangle$, where $\SpeakerM((c,a)) = (2, 2)$, and thus $\Agents_2 \neq \Agents_1$, where $\Agents_2^2 = \Agents_1^2$ and $\Agents_2^1 \neq \Agents_1^1$ with $\BAF^1_{2} \setminus \BAF^1_{1} = \langle \{ c \}, \emptyset, \{ (c, a) \} \rangle $.
    \end{itemize}
    %It can be seen here that the
    Here, support $(b,a)$ provided at timestep $2$ by $\Agents^1$ is learnt by $\Agents^2$ (as is conventional in AXs \cite{Rago_23}) as a support, indicating that the agents agree on the relation.
    Then, at timestep $2$, $\Agents^2$ provides the attack $(c,a)$, e.g. because pineapple on a pizza is anathema in Italy.
    However, $\Agents^1$ learns the relation $(c, a)$ as a support, indicating that they considered $c$ to be providing reasoning for $a$, e.g. because $\Agents^1$ likes pineapple on %their 
    pizza.
    This %demonstrates 
    shows how FAXs, differently to AXs, allow for differences in the way %that 
    agents interpret relations.
    Given that $\SF^1$ and $\SF^2$ are dialectically monotonic% semantics
    , we know that $\SF^1(\BAF^1_0, \arga) = \SF^1(\BAF^1_1, \arga) \leq \SF^1(\BAF^1_2, \arga)$ and $\SF^2(\BAF^2_0, \arga) \leq \SF^2(\BAF^2_1, \arga) = \SF^2(\BAF^2_2, \arga)$, respectively.
\end{example}

We can define a notion to restrict FAXs so that agents therein can be deemed to share a lingua franca, 
%which was enforced in \cite{Rago_23} but relaxed in FAXs, holds.
as follows.
\todoE{test how often this restriction is satisfied in practice?}

\begin{definition}
\label{def:linguafranca}
    Given a FAX $F$ for $e$ amongsts $\Agents$, with 
    $F=\langle \BAFx_0, \ldots,$ $ \BAFx_{n}, \Agents_0, \ldots,\Agents_n, \SpeakerM 
    \rangle$, $\Agents_n^i \in \Agents_n$ such that $\Agents_n^i=(\Range^i,\BAF^i_n,\SF^i)$ and  $\BAFi_n = \langle \ArgsI_n, \AttsI_n, \SuppsI_n \rangle$,
    we say that $F$ has an \emph{effective lingua franca} iff    
    
    $(\bigcup_{\Agents^i_n \in \Agents_n} \AttsI_n ) \cap (\bigcup_{\Agents^i_n \in \Agents_n} \SuppsI_n ) \!=\! \emptyset$.
\end{definition}

It can be seen that the FAX in Example \ref{ex:FAXs} does not have a lingua franca since the agents disagree on the relation $(c,a)$.
%
%Intuitively, 
Typically, FAXs begin because there is a \emph{conflict} between agents, amounting to a different \emph{stance} on the explanandum: 

\begin{definition}
\label{def:stance} \label{def:conflict}
    Given an agent  $\AgentI$, i.e. a private triple $(\RangeI,\BAFi,\SFi)$    (for some explanandum $\arge$)  with $\BAFi = \langle \ArgsI, \AttsI, \SuppsI \rangle$, for any $\arga \in \ArgsI$, let $\AgentI$'s \emph{stance on $\arga$} be defined as $\StanceI(\BAFi,\arga) = +$ (\emph{positive stance}) iff $\SFi(\BAFi, \arga) \in \RPosI$, and $\StanceI(\BAFi,\arga) = -$ (\emph{negative stance}) otherwise.
Then, a set $\Agents$ of agents/ private triples for explanandum 
%the same 
$\arge$ is \emph{in conflict wrt $\arge$}  iff there are two or more agents in $\Agents$  with different stances on $\arge$.
\end{definition}

Note that this
 notion of stance is adapted from \cite{Rago_23}, ignoring the neutral stance therein.

We see FAXs as means to lead to resolution of initial conflicts, by allowing agents to argue while identifying and filling any gaps in their beliefs (represented by their private BAFs). We adapt the following from \cite{Rago_23}, to characterise FAXs successfully leading to resolution (or not), from an initial conflict.

\begin{definition}\label{def:resolution}
 Let  $\Agents$ be a set of agents/private triples for explanandum $\arge$.
 Let $\Agents$ be in conflict wrt %explanandum 
 $\arge$. %\todo{AR: in the AX paper we assumed a conflict existed?}
Let $F\!=\!\langle \BAFx_0, \ldots, \BAFx_{n}, \Agents_0, \ldots,\Agents_n, \SpeakerM 
    \rangle$ 
    be a FAX for $\arge$ amongst 
    $\Agents$. Then: 
    
    \begin{itemize}
        \item $F$ is 
        \emph{unresolved at timestep $t$}, for %$0< t \leq n$
        $t \in ]n]$, iff $\Agents_t$ is in conflict wrt $\arge$, and  \emph{resolved at $t$} otherwise;
         \item $F$ is \emph{unresolved} iff it is unresolved at every timestep 
        %$0 %\leq 
        %< t \leq n$
        $t \in ]n]$%\todo{should we not add 0?} NO, AS ASSUMING INITIAL CONFLICT
        ;
        \item $F$ is \emph{resolved} iff it is resolved at timestep $n$% \delete{and it is unresolved \todo{this may be too strong for IC, where we have a fixed length \AR{I think we can simplify to this and leave a footnote?}} at every timestep %$0 < t <n$
        %$t \in \{1,\ldots, n-1\}$}%\todo{should we not add 0?} NO, AS ASSUMING INITIAL CONFLICT
        . %\todo{it makes sense to me to say that you stop as soon as you resolve, is it? this is a choice, and does not need to be in the definition, it could be an assumption we make oiutside } 
       \end{itemize}
    \end{definition}

\begin{example}
\label{ex:(non)resolution}
  Continuing from Example \ref{ex:FAXs}, we know that $\SF^1(\BAF^1_0, \arga) \in \RPos^1$ and $\SF^2(\BAF^2_0, \arga) \in \RNeg^2$, and so $\Stance^1(\BAF^1_0, \arga) = +$ $\Stance^2(\BAF^2_0, \arga) = -$, meaning $\Agents$ is in conflict wrt $\arga$. 
  At timestep 1, let us assume that, although $\SF^2(\BAF^2_0, \arga) \leq \SF^2(\BAF^2_1, \arga)$, it remains the case that $\SF^2(\BAF^2_1, \arga) \in \RNeg^2$. Since $\SF^1(\BAF^1_0, \arga) = \SF^1(\BAF^1_1, \arga) \in \RPos^1$, we can see that the FAX is unresolved at timestep 1. Then, since $\SF^1(\BAF^1_1, \arga) \leq \SF^1(\BAF^1_2, \arga)$, thus $\SF^1(\BAF^1_2, \arga) \in \RPos^1$ and $\SF^2(\BAF^2_1, \arga) = \SF^2(\BAF^2_2, \arga)  \in \RNeg^1$, we know that $\Stance^1(\BAF^1_2, \arga) = +$, $\Stance^2(\BAF^2_2, \arga) = -$ and thus the FAX is unresolved.
  Meanwhile, if $\Agents^1$ had interpreted $(\argc,\arga)$ as an attack, let us say at an alternate $t=2'$, since $\SF^1$ is dialectically monotonic we know that it would have been the case that $\SF^1(\BAF^1_1, \arga) \geq \SF^1(\BAF^1_2, \arga)$, and the FAX may have been resolved if $\SF^1(\BAF^1_{2'}, \arga) \in \RNeg^1$, thus giving $\Stance^1(\BAF^1_{2'}, \arga) = \Stance^2(\BAF^2_{2'}, \arga) = -$.
\end{example}

When FAXs are used for conflict resolution,
exchange BAFs therein can be seen as \emph{explanations}, in that they unearth the reasoning behind the resolution or otherwise of the conflict amongst the agents, with evidence that the explanandum is ``correct'' or not (when the FAX is resolved, depending on the final stance of all agents) or why it cannot be deemed ``correct'' or otherwise (when the FAX is unresolved).
For illustration,
in the first FAX ($t=2$) in Example~\ref{ex:(non)resolution}, the agents do not share the same stance on the explanandum due to their differing interpretations of $(\argc,\arga)$,
whereas in the second FAX ($t=2'$), they agree on both this relation being an attack and on their final stances on the explanandum.

When using FAX for explaining image classification (from Section~\ref{sec:EX-IC}), we will restrict attention to special forms of FAXs, as follows.

\begin{definition}
A \emph{strictly interleaved FAX} for  $\arge$ amongst 
    $\Agents$ is a FAX
$\langle \BAFx_0, \ldots, \BAFx_{n}, \Agents_0, \ldots,\Agents_n, \SpeakerM 
    \rangle$ 
    for  $\arge$ amongst 
    $\Agents$ 
such that

    \begin{itemize}
        \item $\Agents=\{\Agents^1,\Agents^2\}$ (i.e. with $|\Agents|=2$);
    \item for each timestep %$0 <t \leq n$
    $t \in ]n]$ there exists exactly one $(\arga,\argb)$ \todoE{is this really happening in the implementation? to be discussed after deadline} such that $\SpeakerM((\arga,\argb))=(%\Agents^
    k,t)$ and %\AR{for $i,j \in \{ 1, 2\}$}
    %\begin{itemize}
        %\item 
        if $t$ is odd then %\delete{$\SpeakerM((\arga,\argb))=(\Agents^i,t)$}
        $k=%i
        1$;
        \CR{else $k=2$.}
        %
        %%%\item if $t$ is even then %\delete{$\SpeakerM((\arga,\argb))=(\Agents^j,t)$}
        %%%$k=%j
        %%%2$.
    %\end{itemize}
    \end{itemize} 
    %An \emph{equal opportunity strictly interleaved FAX} is a  strictly interleaved FAX where $n$ is even.
    If $n$ is even,
    then the %strictly interleaved 
    FAX is \emph{equal opportunity}.
    \end{definition}

%For illustration,
The FAX in Example~\ref{ex:FAXs} is equal opportunity strictly interleaved.
Intuitively,  in a strictly interleaved FAX agents take turns, contributing one attack/support at a time %.  In an equal opportunity strictly interleaved FAX both agents make
and making the same number of contributions, %so they 
to have the same chances at persuading the other% agent
.

\section{%FAXs 
Explanations for Image Classification}
\label{sec:EX-IC}

We see explanations for image classification as (exchange BAFs of) equal opportunity strictly interleaved FAXs amongst:
\begin{itemize}
    \item 
$\Agents^1$ (the \emph{proponent}, arguing for the class in $\mathcal{Y}$ predicted by the underlying image classifier for input image $x\in \mathcal{X}$ of interest), and

\item $\Agents^2$ (the \emph{opponent}, arguing against the predicted class). 
\end{itemize}
For simplicity, we restrict attention to the top two classes in the ordering determined by the image classifier ($g$, see Section~\ref{sec:prelim}) for the input image% of interest
, so that 
%agent 
$\Agents^2$
argues against the predicted (top) class by arguing for the 
second best in the ordering.
In the remainder of the paper we assume, without loss of generality, that class $1$ is the predicted class for $x$ and class $2$ is the second best. In line with the machine learning literature, we also  refer to class 1 as $y$.
%\delete{ and class 2 as $\overline{y}$}.

In our approach to explaining image classification using FAXs, each agent argues for a particular class, and thus class-specific features (see Section~\ref{sec:prelim}) play the role of arguments exchanged between agents.
The arguments put forward by the proponent and opponent can be seen as playing the role of positive and negative, respectively, feature attributions as in some XAI literature \cite{lime,SHAP,deeplift,selvaraju2017grad}. However, intuitively, the exchange BAF in the FAX can also convey the reasoning of the classifier.
%\delete{For it to be a \emph{faithful explanation} thereof,  the level of certainty of the FAX (as per Definition~\ref{def:confidence}) needs to ``match'' the classifier's ``confidence''.} 
We will provide an evaluation of FAXs as explanations of image classifiers in Section~\ref{sec:results}. Here, we focus on instantiating the generic FAXs for our purposes.

Concretely, we assume that the two agents explaining the output of an underlying image classifier by virtue of a %(equal opportunity strictly interleaved) 
FAX 
can leverage on 
\emph{class-specific classifiers} $q^1:2^{\mathcal{Z}^1\cup\mathcal{Z}^2}\rightarrow [0%-1
,1]$ and $q^2:2^{\mathcal{Z}^1\cup\mathcal{Z}^2}\rightarrow [0%-1
,1]$, %\delete{respectively,} 
allowing the agents to evaluate sets of their own and other agents' arguments (in other words, amounting to their private evaluation methods). 
Until 
Section~\ref{sec:set-up}, we ignore how
%the
these class-specific classifiers can be learnt, alongside the class-specific features, so that they are faithful to the original image classifier being explained (see Section~\ref{sec:eval}), %\delete{as this faithfulness}
which is crucial to guarantee that FAXs as explanations are also faithful. 
%\delete{THIS SEEMS SWAPPED, AM I MISSING SOMETHING? refer to the agent-$1$-specific classifier wrt class $2$ simply as $q^1$ and to the agent-$2$-specific classifier wrt class $1$ as $q^2$}
We also refer to  the class-specific classifiers as \emph{private classifiers} (with $\mathcal{Z}^{1}$  and 
$\mathcal{Z}^{2}$
also referred to  as \emph{private features})%\todo{get rid of this if time})
.
Finally, given that we aim at explaining the output of the image classifier, we use 
$(x,y)$ as the \emph{explanandum} \arge.

%\delete{In the remainder of this section we} %define these concrete agents
We now instantiate the general Definition~\ref{def:agent}, describing agents, to capture proponent and opponent for image classification. 
%We characterise these agents argumentatively as follows:

\begin{definition}
\label{def:imagetriple}
    Let $\indexAg\in \{1,2\}$.  Then, the \emph{%(proponent/opponent)
    initial image classification agent} %$\Agents^i_0$ 
    is a private triple $(\RangeW,\BAFW,\SFW)$ for $(x,y)$ such that: 
    \begin{itemize}
        \item $\RangeW \!=\! [0,1]$ is the  \emph{agent's evaluation range}, with $\RNeg^\indexAg\! =\! [0,\tau)$ and $\RPos^\indexAg \!=\! [\tau,1]$, for some \emph{threshold} $\tau \in [0,1]$; 
        \item $\BAFW= \langle \ArgsW, \AttsW, \SuppsW \rangle$ is a BAF where if $i=1$ then
        \begin{itemize}
            \item $\ArgsW \subseteq \{(x,y)\} \cup \mathcal{Z}^1$ such that $(x,y) \in \ArgsW$,
            \item $\AttsW = \emptyset$,
            \item $\SuppsW \subseteq \mathcal{Z}^1 \times \{(x,y)\}$; 
        \end{itemize}
        and if $i=2$ then
        \begin{itemize}
            \item $\ArgsW \subseteq \{(x,y)\} \cup \mathcal{Z}^2$ such that $(x,y) \in \ArgsW$,
            \item $\AttsW \subseteq \mathcal{Z}^2 \times \{(x,y)\}$,
            \item $\SuppsW = \emptyset$; 
        \end{itemize}
        \item $\SFW: \ArgsW \rightarrow \RangeW$ is such that:   
        \begin{itemize}
            \item $\SFW(\BAFW,(x,y))= \qAgentW(\AttsW((x,y))\cup \SuppsW((x,y)))$;
            \item for $z \in \ArgsW \setminus \{(x,y)\}$, %\delete{$\SFW(z)=|q^{\indexAg}(\AttsW((x,y)) \cup \SuppsW((x,y)) \cup  \{z\}) - \qAgentW((\AttsW((x,y)) \cup \SuppsW((x,y))) \setminus \{z\})|$}
            $\SFW(\BAFW,z)=q^{\indexAg}(\{ z \})$.   
        \end{itemize}
    \end{itemize}
  %  \todo{drop all of this?} \delete{Let $\indexotherAg \in \{1,2\}\setminus \{\indexAg\}$. When  $\AttsW \cap (\mathcal{Z}^{\indexotherAg}\times \{(x,y)\})=\emptyset$, $\SuppsW \cap (\mathcal{Z}^{\indexotherAg}\times \{(x,y)\})=\emptyset$ and $\ArgsW \subseteq \{(x,y)\} \cup \mathcal{Z}^{\indexAg}$, we refer to $\Agents^i$ / $(\RangeW,\BAFW,\SFW)$  as \emph{introverted}.}
\end{definition}

\begin{corollary}
\label{lem:tau}
    %if at the initial private BAFs the sets of non-zero strength attackers and supporters of the agents are non-empty, then there exists a threshold in which there is a conflict 
    \CR{If $\SF^1(\BAF^1,\!(x,y)) \!\!\neq\!\! \SF^2(\BAF^2,\!(x,y))$, i.e. 
    $\qAgentI( \Supps^1((x,y))) \!$ $\neq\! \qAgentJ(\Atts^2((x,y)))$, then $\exists \tau \in [0,1]$ such that $\Agents$ is in conflict.}
\end{corollary}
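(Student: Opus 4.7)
\medskip

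\noindent\textbf{Proof plan.} The plan is to pick the threshold $\tau$ strictly between the two agents' evaluations of the explanandum, and then read off the stances directly from the definition of an initial image classification agent. First, I would unpack the evaluation method $\SF^\indexAg$ from Definition~\ref{def:imagetriple}: for agent $1$, since $\Atts^1 = \emptyset$, we have $\SF^1(\BAF^1,(x,y)) = q^1(\Supps^1((x,y)))$; for agent $2$, since $\Supps^2 = \emptyset$, we have $\SF^2(\BAF^2,(x,y)) = q^2(\Atts^2((x,y)))$. So the two reformulations of the hypothesis are equivalent, as the statement already asserts, and both values live in $[0,1]$ because $q^1, q^2$ take values in $[0,1]$.

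Next, write $v_1 = \SF^1(\BAF^1,(x,y))$ and $v_2 = \SF^2(\BAF^2,(x,y))$, and assume without loss of generality that $v_1 > v_2$ (the other case is symmetric, only swapping the role of the two agents). Choose any $\tau$ with $v_2 < \tau \leq v_1$; such a $\tau$ exists in $[0,1]$ because $0 \leq v_2 < v_1 \leq 1$ (for instance, $\tau = v_1$ works). With this choice of $\tau$, Definition~\ref{def:imagetriple} gives $\RPos^1 = [\tau,1]$ and $\RNeg^2 = [0,\tau)$, so $v_1 \in \RPos^1$ and $v_2 \in \RNeg^2$.

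Finally, applying Definition~\ref{def:stance} yields $\Stance^1(\BAF^1,(x,y)) = +$ and $\Stance^2(\BAF^2,(x,y)) = -$, and hence by Definition~\ref{def:conflict} the set $\Agents = \{\Agents^1,\Agents^2\}$ is in conflict with respect to $(x,y)$, which is exactly what we needed to show.

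The proof is essentially bookkeeping against the definitions, so there is no genuine obstacle; the only minor care-point is making sure the chosen $\tau$ actually lies in $[0,1]$ (not merely in the open interval $(v_2,v_1)$, which could be empty when $v_1 = v_2 + \varepsilon$ for tiny $\varepsilon$, though in our setting picking $\tau = v_1$ handles this since $v_1 \leq 1$). I would also note that the corollary implicitly imposes a common $\tau$ for both agents, which is what makes a single witnessing threshold suffice; if the agents were allowed independent thresholds the statement would be even easier, but the uniform-$\tau$ version is the one we need for the conflict-triggering interpretation used in Section~\ref{sec:EX-IC}.
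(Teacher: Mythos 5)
Your proof is correct and is essentially the argument the paper intends: the corollary is stated without an explicit proof, and the natural justification is exactly yours, namely unpacking $\SF^1(\BAF^1,(x,y)) = q^1(\Supps^1((x,y)))$ and $\SF^2(\BAF^2,(x,y)) = q^2(\Atts^2((x,y)))$ from Definition~\ref{def:imagetriple} and choosing the shared threshold $\tau$ to separate the two values (e.g.\ $\tau$ equal to the larger one), so that the two agents land in $\RPos{}$ and $\RNeg{}$ respectively and Definition~\ref{def:conflict} applies. Your closing remarks on the half-open interval $[\tau,1]$ making $\tau = \max(v_1,v_2)$ a valid witness, and on the single $\tau$ being common to both agents, are exactly the right care-points.
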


%\todo{how do we know strength is between 0 and 1? ADD LEMMA?}
%\AK{$q^i$ for any time t ranges between [-1, 1]}
Here, we use specialised notions compared to those in Definition~\ref{def:agent}. 
Specifically, the private BAFs are ``shallow'' and acyclic, with all attacks and supports from private features (of either agent) to the explanandum.
%\cite{Rago_23} as concerns the evaluation method and the structure of the BAFs%(which are shallow and acyclic)
%Attacks and supports are restricted so that features can attack/support the explanandum, but not vice-versa or one-another. Graphically, the considered BAFs are shallow (and acyclic), but they are nonetheless sufficient to provide explanations well-beyond the state-of-the-art in explainability for image classification, as we shall see.
The evaluation ranges are divided into positive and negative evaluations by a threshold $\tau$% (for simplicity, the same for both agents)
, which, as Lemma \ref{lem:tau} demonstrates, can be guaranteed to provide a dividing line between the two classes if the agents' class-specific features 
%are non-trivial (in that they 
have an effect on the private classifiers' outputs.
%\delete{Given that agents take opposite stances to start with, arguing for and against the classifier's output/explanandum and using their private classifiers to obtain the strength of the explanandum, in practice (Section~\ref{sec:set-up}) $\tau$  corresponds to the decision boundary of the classifier between top- and second-best class (thus, $\tau$ is the same for both agents).}
%\todo{check with avinash}
The evaluation method is defined in terms of the class-specific classifier of the agent: the evaluation of the explanandum is given by the classifier applied to its attackers and supporters (which are class-specific
features), while the evaluation of a class-specific feature is given by the private classifier applied to this feature only. 
To obtain explanations for image classifiers, we will use FAXs where agents %\delete{are initially introverted and they} 
update their private BAFs guided by their private classifiers, using the following ``learning strategy''. 

%\delete{ Note also that this means that the initial (necessarily introverted) private triples are such that $\Agents^1$ has only supports towards the explanandum, and $\Agents^2$ has only attacks\todo{how are we sure about this? don't we need to impose it?}} ACTUALLY IT SEEMS FALSE FROM THE DEFINITION BELOW

\begin{definition}
\label{def:imagemonotonic}
    Let  $\langle \BAFx_0, \ldots, \BAFx_{n}, \Agents_0, \ldots,\Agents_n, \SpeakerM 
    \rangle$ be an equal opportunity strictly interleaved FAX for explanandum $(x,y)$ amongst initial image classification agents $\Agents = \{ \Agents^1, \Agents^2 \}$.  Then, for $i, j \in \{ 1, 2 \}$ with $i \neq j$,
     $\Agents^i$ adopts a \emph{dialectically monotonic learning strategy} iff
    %\begin{itemize}
    \iffalse 
        \item  \delete{at timestep $0$, $\BAF^i_0 = \langle \Args^i_0, \Atts^i_0, \Supps^i_0\rangle$ is such that: 
        \begin{itemize}
            \item $\Args^i_0 = \{ (x,y)\} \cup \mathcal{Z}^{i}$;
            %    
            \item $\Atts^i_0 \subseteq \mathcal{Z}^{i} \times \{ (x,y) \}$ where $\forall z \in \mathcal{Z}^{i}$, $(z, (x,y)) \in \Atts^i_0$ iff $q^{i}(\mathcal{Z}^{i}) - q^{i}(\mathcal{Z}^{i} \setminus \{z\}) < 0$;
            %
            \item $\Supps^i_0 \subseteq \mathcal{Z}^{i} \times \{ (x,y) \}$ where $\forall z \in \mathcal{Z}^{i}$, $(z, (x,y)) \in \Supps^i_0$ iff $q^{i}(\mathcal{Z}^{i} ) - q^{i}(\mathcal{Z}^{i} \setminus \{z\}) \geq 0$;
            %
        \end{itemize}
        \fi
     %   \item and,} 
     at timestep %$0 <t \leq n$
        $t \in ]n]$, if $\exists (z,(x,y))$ such that $\SpeakerM((z,(x,y)))=(%\Agents^
        i,t)$, then $\BAF^i_t = \BAF^i_{t-1}$; otherwise $\SpeakerM((z,(x,y)))=(%\Agents^
        j,t)$ and $\BAF^i_t = \langle \Args^i_t, \Atts^i_t, \Supps^i_t \rangle$ is such that: 
        \begin{itemize}
            
            \item %\delete{$(z,(x,y)) \in \Atts^i_t \setminus \Atts^i_{t-1}$ iff $q^{i}(\Atts^i_{t}((x,y)) \cup \Supps^i_{t}((x,y))) - q^{i}(\Atts^i_{t}((x,y)) \cup \Supps^i_{t}((x,y)) \setminus \{ z \}) < 0$;} 
            %\todo{probably better:}
            $(z,(x,y)) \!\in \! \Atts^i_t \!\setminus \!\Atts^i_{t-1}$ iff $q^{i}(\Atts^i_{t-1}((x,y)) \cup \Supps^i_{t-1}((x,y)) \cup \{ z \}) - q^{i}(\Atts^i_{t-1}((x,y)) \cup \Supps^i_{t-1}((x,y))) < 0$;
            \item %\delete{$(z,(x,y)) \in \Supps^i_t \setminus \Supps^i_{t-1}$ iff $q^{i}(\Atts^i_{t}((x,y)) \cup \Supps^i_{t}((x,y))) - q^{i}(\Atts^i_{t}((x,y)) \cup \Supps^i_{t}((x,y)) \setminus \{ z \}) \geq 0$;} 
            %\todo{probably better:}
            $(z,(x,y)) \!\in \! \Supps^i_t \setminus \Supps^i_{t-1}$ iff $q^{i}(\Atts^i_{t-1}((x,y)) \cup \Supps^i_{t-1}((x,y)) \cup \{ z \}) - q^{i}(\Atts^i_{t-1}((x,y)) \cup \Supps^i_{t-1}((x,y))) \geq 0$;
        \end{itemize}
        and $\SF^i$ is such that:
        \begin{itemize}
            \item $\SF^i(\BAF^i_t,(x,y))= \qAgentW(\Atts^i_{t-1}((x,y)) \cup \Supps^i_{t-1}((x,y)) \cup \{ z \})$;
            \item $\forall z' \in \Args^i_t \cap \Args^i_{t-1}$, $\SF^i(\BAF^i_{t},z') = \SF^i(\BAF^i_{t-1},z')$; and 

            \item $\SF^i(\BAF^i_{t},z) = |q^{i}(\Atts^i_{t-1}((x,y)) \cup \Supps^i_{t-1}((x,y)) \cup \{ z \}) -$ 
            
            $ q^{i}(\Atts^i_{t-1}((x,y)) \cup \Supps^i_{t-1}((x,y)))|$.
        \end{itemize}
    %\end{itemize}  
\end{definition}

In the remainder, we refer to equal opportunity strictly interleaved FAXs where both agents adopt a dialectically monotonic learning strategy simply as \FAXIC s.
%
%This type of learning strategy thus requires that 
Note that, in \FAXIC s, the initial private BAF of each agent %, which we know is flat and acyclic, 
contains all %\delete{attackers and supporters}
arguments representing the agent's private features. %which satisfy the conditions of Definition \ref{def:imagetriple}
%\delete{with a negative  and positive impact, respectively, on the agent's private classifier}. 
Then, any arguments learned from the other agent's contributions are such that their addition to the agent's private BAF results in dialectically monotonic behaviour (due to the agent's characterisation of the relation as an attacker or supporter), in the spirit of \cite{Rago_22}.
This naturally leads to% the following theoretical result%, which demonstrates the intuitive argumentative characteristics of the explanations
:

\begin{proposition}
    In any %strictly interleaved FAX $\langle \BAFx_0, \ldots, \BAFx_{n}, \Agents_0, \ldots,\Agents_n, \SpeakerM  \rangle$ 
    \FAXIC\ for an explanandum $(x,y)$ amongst agents $\Agents = \{ \Agents^1, \Agents^2 \}$, %where for $i, j \in \{ 1, 2 \}$ with $i \neq j$,
     for any $\Agents^i \in \Agents$, %if $\Agents^i$ adopts a dialectically monotonic learning strategy, then 
     $\SFi$ is dialectically monotonic. 
\end{proposition}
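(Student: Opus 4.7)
The plan is to verify both bullets characterising dialectical monotonicity (Section~\ref{sec:prelim}) for the evaluation method $\SF^i$ as it ranges over the sequence of private BAFs $\BAF^i_0, \ldots, \BAF^i_n$ arising in the FAX-IC. The crucial structural facts I will exploit are that (i) each $\BAF^i_t$ is shallow (every attack/support points to $(x,y)$ and $(x,y)$ has no outgoing relation), and (ii) each transition $\BAF^i_{t-1} \to \BAF^i_t$ introduces at most one new argument $z$ together with a single new relation $(z,(x,y))$. Together these let me reduce the two-BAF comparison in each bullet to checking one-step updates in the FAX-IC, rather than arbitrary pairs of BAFs.

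For the first bullet, fix a transition at timestep $t \in \,]n]$ in which some $z$ is added via $(z,(x,y))$; the argument whose strength must be compared is $(x,y)$. Using the formula from Definition~\ref{def:imagemonotonic}, $\SF^i(\BAF^i_t,(x,y)) = q^i(\Atts^i_{t-1}((x,y)) \cup \Supps^i_{t-1}((x,y)) \cup \{z\})$, together with the analogous characterisation at $t-1$ (from Definition~\ref{def:imagetriple}, applied to $\BAF^i_{t-1}$ since $\Atts^i_t \cup \Supps^i_t = \Atts^i_{t-1} \cup \Supps^i_{t-1} \cup \{(z,(x,y))\}$), I can directly compare the two values. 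The learning strategy's definition of attacks places $(z,(x,y))$ in $\Atts^i_t \setminus \Atts^i_{t-1}$ exactly when $q^i(\ldots \cup \{z\}) - q^i(\ldots) < 0$, i.e.\ exactly when $\SF^i(\BAF^i_t,(x,y)) < \SF^i(\BAF^i_{t-1},(x,y))$, matching the attack clause; the support case is the symmetric $\geq$ branch. For new arguments $z$ themselves, their strength is defined only at the timestep they enter, and the clause $\SF^i(\BAF^i_t,z') = \SF^i(\BAF^i_{t-1},z')$ for $z' \in \Args^i_t \cap \Args^i_{t-1}$ rules out any spurious strength changes.

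For the second bullet, I would argue its precondition is never triggered by FAX-IC transitions, so the bullet is satisfied vacuously. The only argument with non-empty attackers/supporters in any $\BAF^i_t$ is $(x,y)$; yet the transition $\BAF^i_{t-1} \to \BAF^i_t$ strictly changes $\Atts^i_{t-1}((x,y)) \cup \Supps^i_{t-1}((x,y))$ by adding $z$, so the condition $\Atts'(\arga) = \Atts(\arga) \wedge \Supps'(\arga) = \Supps(\arga)$ required with $\arga = (x,y)$ fails. For any other $\arga \in \mathcal{Z}^i$, the shallow structure guarantees $\Atts^i_t(\arga) \cup \Supps^i_t(\arga) = \emptyset$ in every $\BAF^i_t$, so the existential in the bullet's hypothesis cannot be witnessed. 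Hence the bullet's implication holds trivially.

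The main obstacle is the second bullet: its premise is intrinsically about reweighting attackers/supporters while holding the neighbourhood fixed, which is a scenario FAX-IC transitions never realise, so the argument has to make this vacuousness precise rather than exhibit a positive verification. A secondary bookkeeping step is confirming that $\Atts^i_t((x,y)) \cup \Supps^i_t((x,y)) = \Atts^i_{t-1}((x,y)) \cup \Supps^i_{t-1}((x,y)) \cup \{z\}$, so that the formulas from Definitions~\ref{def:imagetriple} and~\ref{def:imagemonotonic} agree on $\SF^i(\BAF^i_t,(x,y))$ and the comparison in the first bullet is over the same quantities used to define attack/support membership.
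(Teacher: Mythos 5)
Your proposal is correct and follows essentially the same route as the paper's proof: both exploit the flatness of the private BAFs to dispose of the second bullet of dialectical monotonicity (non-explanandum arguments have empty, unchanging neighbourhoods and fixed strengths, so its precondition is never met), and both verify the first bullet by observing that Definition~\ref{def:imagemonotonic} classifies $(z,(x,y))$ as an attack or support precisely according to the sign of the change in $q^i$, which by Definition~\ref{def:imagetriple} is the change in $\SF^i(\cdot,(x,y))$. Your explicit check that the precondition of the second bullet fails for $\arga=(x,y)$ because its neighbourhood changes is a slightly more careful rendering of what the paper states only implicitly, but it is the same argument.
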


\begin{proof}
    %It can be seen 
    From Definition \ref{def:imagetriple}, %that 
    for any $\Agents^i \in \Agents$ and any timestep $t$, $\BAF^i_t$ is flat, i.e. it consists of the explanandum $(x,y)$ and its attackers or supporters.
    It can then be seen from Definition \ref{def:imagemonotonic} that the strength of any argument $a \in \Args^i \setminus \{ (x,y) \}$ does not change with the timestep (its attackers and supporters remain empty throughout), i.e. for any timesteps $t>0$, it is necessarily the case that $\SF^i(\BAF^i_t,a) = \SF^i(\BAF^i_{t-1},a)$. Thus, the last bullet of the definition of dialectical monotonicity (see Section \ref{sec:prelim}) is satisfied.
    Meanwhile, the first bullet can only apply to arguments added which attack or support $(x,y)$, since this is the only argument which is attacked or supported, by Definition \ref{def:imagetriple}.
    We can see from Definition \ref{def:imagemonotonic} that for any $t$ such that there exists some $(z,(x,y)) \in \Atts^i_t \setminus \Atts^i_{t-1}$, it is necessarily the case that $q^{i}(\Atts^i_{t}((x,y)) \cup \Supps^i_{t}((x,y))) < q^{i}(\Atts^i_{t-1}((x,y)) \cup \Supps^i_{t-1}((x,y)))$, and thus, by Definition \ref{def:imagetriple}, $\SF^i(\BAF^i_t,(x,y)) < \SF^i(\BAF^i_{t-1},(x,y))$.
    Again from Definition \ref{def:imagemonotonic}, we can see that for any $t$ such that there exists some $(z,(x,y)) \in \Supps^i_t \setminus \Supps^i_{t-1}$, it is necessarily the case that $q^{i}(\Atts^i_{t}((x,y)) \cup \Supps^i_{t}((x,y))) \geq q^{i}(\Atts^i_{t-1}((x,y)) \cup \Supps^i_{t-1}((x,y)))$, and therefore, by Definition \ref{def:imagetriple}, $\SF^i(\BAF^i_t,(x,y)) \geq \SF^i(\BAF^i_{t-1},(x,y))$. 
    Thus, the property is satisfied in both cases for the first bullet, and dialectical monotonicity is satisfied overall.
\end{proof}

\iffalse 
This result is important because it sanctions the suitability of our choices in instantiating the semantics for \FAXIC s %for image classification
so that the resulting explanations have properties that argumentation practitioners have identified as important~\cite{Amgoud_18,Baroni_19,Potyka_21} and that humans find intuitive (as shown for a form of dialectical monotonicity in \cite{dax}).
\fi

\CR{This result sanctions that our choices in instantiating the semantics for \FAXIC s leads to explanations that are dialectically monotonic, which has been identified as an important property by argumentation practitioners~\cite{Amgoud_18,Baroni_19,Potyka_21} and found to be intuitive by humans (as shown for a form of dialectical monotonicity in \cite{dax}).}

\iffalse
Experiments:
\begin{itemize}
    \item checking number of each type of FAXs as per Definition \ref{def:confidence}
    \item checking how the types of FAXs as per Definition \ref{def:confidence} correspond to variance and bias of the classifiers
    %\item checking how often the effective lingua franca holds as per Definition \ref{def:linguafranca}
    \item checking how often attackers and supporters are monotonic?
\end{itemize}
\fi

\begin{figure*}
    \centering
    \includegraphics[width=0.95\textwidth]{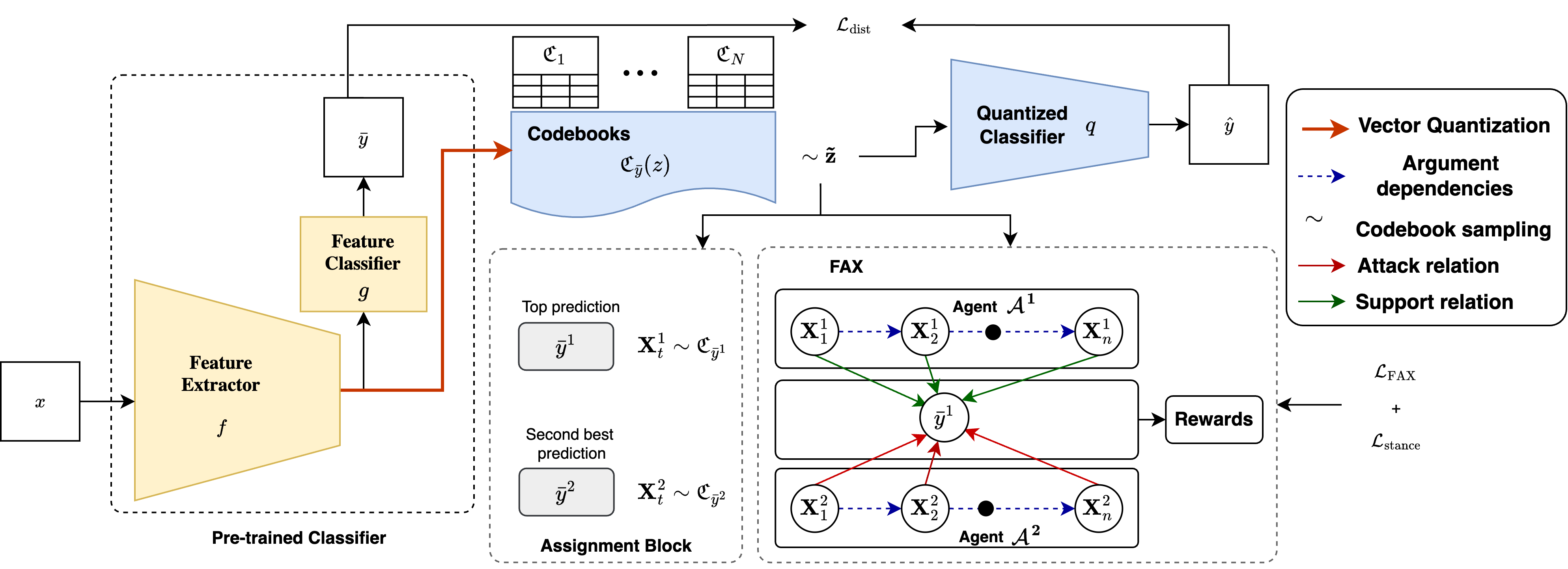}
    \caption{Overview of our implementation (argument dependencies are temporal).}
    \label{fig:overview}
\end{figure*}

\section{%Experimental Set-up
Implementation}
\label{sec:set-up}

In this section, we detail our methodology (overviewed in Figure~\ref{fig:overview}) for obtaining and  evaluating empirically \FAXIC s %deployed for 
with data for image classification.
Specifically, we detail 
how class-specific (private) features  can be obtained (Section~\ref{sec:feat}); and how class-specific agents (including their private classifiers and their policies for contributing to \FAXIC s) are learnt and deployed (Section~\ref{sec:agent}).
%Figure~\ref{fig:overview} gives an overview of our implementation.
Note that, while \FAXIC s are amongst two agents only (for the top- and second-best predicted classes by the classifier), given that different inputs will result in different predictions we need to develop, at training time, all agents, with their private features and classifiers. 
%\FT{The code is publicly available at .}\todo{add url}

\subsection{%Feature discretization
Class-specific discrete features}
\label{sec:feat}
%The main objective of discretization is to learn class-specific discrete features, which also serve as a set of private features. For this, we initialise $N_c$ codebooks, where any individual codebook $\mathfrak{C}_c \in \mathbb{R}^{\tilde{n} \times d}$ consists of a randomly initialised set of discrete features, where, for simplicity, we use the fixed number of discrete features $\tilde{n}$ for every codebook with each feature of dimension $d$.

We obtain %class-specific features 
these by 
simultaneously training, similarly %as in \cite{van2017neural}, \cite{jang2016categorical}, 
to \cite{kori2022explaining}: 
\begin{itemize}
    \item 
$N$ \emph{codebooks} 
$\mathfrak{C}_1, \ldots, \mathfrak{C}_N$ (one per class in $\mathcal{Y}$); for each $i \in \{1, \ldots, N\}$,
$\mathfrak{C}_i \in \mathbb{R}^{\tilde{n} \times d}$   corresponds to
$\mathcal{Z}^i$ (see Section~\ref{sec:prelim});\footnote{We assume  for simplicity that $M^i=\tilde{n}$  for all $i\in \{1, \ldots, N\}$, namely all agents have the same number of class-specific features.} for $z=f(x)$, if $\bar{y}$ is the  top-class predicted
by $g(z)$, we use $\mathfrak{C}_{\bar{y}}(z)$ to stand for the specific discrete features in 
$\mathfrak{C}_{\bar{y}}$ corresponding to $z$  (we use %simply 
$\tilde{z}$ to stand for $\mathfrak{C}_{\bar{y}}(z)$ if clear% from the context
);
\item 
a \emph{quantized classifier} $q:\mathfrak{C} \rightarrow \mathcal{Y}$,
for $\mathfrak{C}=\mathfrak{C}_1\cup \ldots \cup \mathfrak{C}_N$, distilling the knowledge of the feature classifier $g$ so that
%, if $z=f(x)$, $y$ is the top-class predicted by $g(z)$,  and $\mathfrak{C}_{y}$ corresponds to $z$, then 
$q(\tilde{z}% = \mathfrak{C}_{y}(z)
)$ approximates $g(z)$.
%\todo{is $q(\mathfrak{C}_{y})$ = $q^y$? NO.}
\end{itemize}
 %
 % \todo{NEEDS DECIDING we could cut the below short/move it to supp material, by saying: we learn codebooks and q by adapting the techniques from...}
Intuitively, 
%following \cite{van2017neural},
%in discrete representation learning, 
a codebook is a collection of averaged patterns or ``concept representations'' that summarize the key features % in a dataset
of data \cite{van2017neural}. 
%It is built dynamically, meaning it gets updated as each new batch of images is processed while training. For every image in a batch, its features contribute to the overall representation stored in the codebook, making it a running summary that adapts as more data is seen. 

\paragraph{Training.} To %obtain 
get the codebooks, %we start with a random initialisation for each. Then, 
%inspired by 
we draw inspiration from \cite{van2017neural}. Intuitively, for each $(x,y) \!\in\! \mathcal{D}$, with $z\!=\!f(x)$, for $\bar{y} \!=\! \texttt{argmax}(g(z))$ (i.e. $\bar{y}$ is the top-class predicted by the feature classifier $g$ for $z$), we aim at
deterministically mapping the elements of $z$ to the nearest elements of $\mathfrak{C}_{\bar{y}}$ using some convex distance function $\delta$, as follows:

% As described in Equation \ref{eqn:codebooks}, inspired by \cite{van2017neural}, quantization is achieved by deterministically mapping the elements of the continuous feature $z=f(x) \in \mathcal{Z}$ to the nearest element of $\mathfrak{C}_c$ using some convex distance function $\hat{d}$, where $c$ is decided by the feature classifier as $c = \texttt{argmax}(g(z))$.

\vspace*{-0.2cm}
\begin{equation}
    \tilde{z} = \{\argmin_{\tilde{k} \in \mathfrak{C}_{\bar{y}}} \delta (z_k, \tilde{k})| z_k \in z \} 
    \label{eqn:codebooks}
\end{equation}

To learn  %($\tilde{n}$) codebook features in each codebook 
this in an end-to-end fashion, we use the Gumble sampling procedure from \cite{jang2016categorical},  resulting in $\tilde{z}$ as a projection of the continuous features in $z$ onto every element of codebook 
$\mathfrak{C}_{\bar{y}}$, corresponding to pairwise similarity scores between $z$ and all $\tilde{n}$ codebook features $\mathfrak{C}_{\bar{y}}$.
The resulting quantization objective for training is described as follows (see \cite{jang2016categorical} for details):
\begin{equation}
    \mathcal{L}_{\textrm{quant}} = \sum \texttt{softmax}(\tilde{z}) \left(\texttt{logsoftmax}(\tilde{z})\right)
    \label{eqn:quantizationloss}
\end{equation}

To %make sure that codebooks and quantized classifier are learnt to faithfully represent the (continuous) feature classifier $g$, 
faithfully learn the quantized classifier $q$, we adopt the following \emph{distillation} loss %function  
during training, where 
${\bar{y}}$ is the predicted class (by $g$) for input %image
$x$ and 
$\mathrm{CE}$ correspond to cross-entropy loss: 
% \todo{check/change symbols below, what is $\tilde{z}$?$q(\mathfrak{C}_{\tilde{y}}(f(x))$ should just be $q(\mathfrak{C}_{\tilde{y}})$? $\tilde{y}$ should be the ground truth?} 
% \todo{what is CE? Cross-Entropy? add a reference?}

\begin{equation}
    \mathcal{L}_{\textrm{dist}} = \mathcal{L}_{\textrm{quant}} + \mathrm{CE}\left(q(%\mathfrak{C}_{\tilde{y}} \left%(f(x)
    \tilde{z})), {\bar{y}} \right)
    \label{eqn:distillation}
\end{equation}

By using 
this loss% function
,  we strive towards a faithful $q$ to the %original 
classifier.

\subsection{%Agent modelling
Class-specific agents} %\FT{classifiers and  agents behaviour}}
\label{sec:agent}

%Here, we initialise 
We obtain $N$ agents, each of which is responsible for arguing wrt a particular class in $\mathcal{Y}$. 
For any given input $x$, an instance of \FAXIC\ is obtained between the two agents whose class is the top-class predicted by  $g(f(x))$  (for the proponent) and the second-best class  (for the opponent).
Specifically, if $g(f(x))=[\bar{y}^1, \dots, \bar{y}^{N}]$,
then the proponent is $\Agents^{\bar{y}^1}$ and
the opponent is $\Agents^{\bar{y}^2}$, as depicted in the assignment block in Figure \ref{fig:overview} (the assignment block selects the agents based on the estimated top two classes).

We model each agent $\Agents^{i}$ (arguing for class $\bar{y}^i$) as a \emph{sequence %?sequential?
model} $\zeta^{i}$.
In line with \cite{kori2022explaining}, we use gated recurrent units (GRUs) for realizing these sequence models. %, where each agent takes turns to make an argument and agents have flexibility to consider other's arguments in making its choices in its next turn
The sequence model $\zeta^{i}$
operates on a hidden state vector ($h^i_{t-1}$)% $h^i$% in $\zeta^i$ 
, which can be treated as a proxy interpretation for %encoded 
information %setting the context of succesful exchanges. WHAT DOES SUCCESSFUL MEAN?
drawn from the exchange BAF in the \FAXIC\ till the current timestep ($t$).
The sequence model uses a private modulator network $\mathcal{M}^i$
which encodes arguments (in $\ArgsI_{t}$ in the agent's private BAF $\BAFi_t$) to update the hidden state representation:
\begin{align}
    %\ArgsI_{t} \sim \Pi^i(h^i_{t-1} \mid \bigcup_i^2 \ArgsI_{t-1}, \tilde{y}^i)    \\
    h^i_{t} = \zeta^i(h^i_{t-1}, \mathcal{M}^i(\ArgsI_{t})) 
    \label{eqn:sequence_steps}
\end{align}
The output (hidden state vector) of $\zeta^i$  is then used to determine 
 a \emph{policy function} $\Pi^{i}$, for determining how agents contribute attacks and supports (in the agent's private BAF $\BAFi_t$) to \FAXIC s% given the set of all arguments contributed by either agent so far
, by determining which argument these attacks (if the agent is the opponent $\Agents^j$) or supports (if the agent is the proponent $\Agents^i$) are drawn from:
\begin{align}
    \ArgsI_{t} \sim \Pi^i(h^i_{t-1} \mid %\bigcup_i^2 
    \ArgsI_{t-1} \cup \Args^{j}_{t-1}, \bar{y}^i)  
 \end{align}
 Note that these arguments are from $\mathfrak{C}_{\bar{y}^i}$, corresponding to private features in $\mathcal{Z}^{\bar{y}^i}$.
 % using policy network $\Pi^i: \mathcal{H} \rightarrow \mathfrak{C}_{\tilde{y}^i}$, where $h^i_t \in \mathcal{H}^i$.
%
The output (hidden state vector) of $\zeta^i$  is also used to obtain the \emph{private classifier} $q^{i}$, as a multi-layer perceptron with hidden state vectors as inputs,
associating them to class confidence in turn used to
assign values to sets of private features as in Section~\ref{sec:EX-IC}.

%We also use additional agent-specific classifier $q^i:\mathcal{H} \rightarrow [-1, 1]^{|\mathcal{Y}|}$, on which we apply softmax operation, mapping them to class labels.  NOT INFORMATIVE ENOUGH THIS WAY...

\paragraph{Training.} 
\label{sec:FAX-impl}
To obtain each agent's sequence model in an end-to-end fashion,
we adapt the REINFORCE learning algorithm from \cite{mnih2014recurrent}.
Specifically, we see 
the next argument prediction/selection as a reinforcement learning task, 
%which helps in training $\zeta^i$ in an end-to-end fashion with delayed rewards $r^i$ (see Definition~\ref{def:reward}).
with agents' rewards %defined 
as follows.

\begin{definition}
\label{def:reward}
Let  $\langle \BAFx_0, \ldots, \BAFx_{n}, \Agents_0, \ldots,\Agents_n, \SpeakerM 
    \rangle$ be a \FAXIC\ for explanandum $(x,y)$ amongst agents $\Agents \!=\! \{ \Agents^1, \Agents^2 \}$. 
    Then, for $i\!\in\! \{ 1, 2 \}$ and $t \!\in\! \{1, \ldots, n\}$, %agent 
    $\AgentI$'s \emph{reward} at timestep $t$ is $\rewardI_t = \StanceI(\BAFi_0, \arge)\SFi(%\BAFx_t
    \BAFi_t, \arge)$% \todo{what does this mean? stance is either $+$ or $-$ (symbols, not positive or negative numbers), yes, reward is basically the strength depending upon the stance it's either negative or positive}
    .\footnote{Here, we treat the agent's initial stance as a (positive/negative) sign  for the explanandum's current strength in the %exchange BAF 
    %\todoE{the implementaiton uses exchange, not private BAF? TO BE DISCUSSED AFTER THE DEADLINE}
     private BAF.}
    \label{def:reward}
\end{definition}

Thus, %we choose an agent reward to be 
reward is 
a continuous-valued function modelled using the agent's ``confidence'' towards the explanandum, and reflecting the contributed arguments to date and the agent's original stance towards the explanandum. Note that this stance is always positive for the first agent and negative for the second, given our choice of $\tau$ in Section~\ref{sec:EX-IC}. %\todo{this what we need to prove the lemma earlier on?}
%The reward is a contribution of private evaluation of the contributed arguments.
Note also that rewards are 
between $[-1, 1]$. %\todo{we define sigma in terms of private classifier and reward for private classifier in terms of sigma: is this not circular?, yaa but the objective is to maximise the class confidence or increase strength so i}
% \begin{remark}
%     Since, base reward and persuasion \todo{what does persuasion mean here?} is based on the agents behaviour (either supporting or attacking to the classifier's claim), which is monitored by stance ($\StanceI (\BAFi_0, \arge)$). All $\lambda_i$'s control the agent behaviour resulting in different properties. 
% \end{remark}

We combine the agent's reward with REINFORCE gradients% within the following combined loss function
:

\begin{equation}
\mathcal{L}^i_{\mathrm{FAX}} = -\sum_t \log \Pi^i_{\theta^i}(h^i_t \mid  \ArgsI_{t-1}\cup \Args^j_{t-1}, \bar{y}^i) (r^i_t - b^i_t)    
\label{eqn:FAX_loss}
\end{equation}
where  $\theta^i$ are the parameters of policy $\Pi^i$ (and thus modulator $\mathcal{M}^i$) being learnt and $b^i_t$ is a baseline value estimated by $\Agents^i$ at timstep $t$, which is mainly used to reduce the variance in the agent's behaviour during the exploration stage in (reinforcement) learning.
Minimisation of this %objective results in the same behaviour as REINFORCE learning rule described above \cite{mnih2014recurrent}, this 
loss can also be viewed as maximisation of log-likelihood of the policy distribution \cite{mnih2014recurrent}.

Finally, to encourage agents to argue for a particular class ($\bar{y}^1$ for the proponent and $\bar{y}^2$ for the opponent) we use
the \emph{stance loss} $\mathcal{L}^i_{\mathrm{stance}} = \mathrm{CE}(\Sigma^i(\BAFi_t,(x,\bar{y}^i)), \bar{y}^i)$ %\todo{$y$ wrong? should it be $\hat{y}^i$?} 
to obtain  the combined loss%function
: $$\mathcal{L}_{\mathrm{total}} = \mathcal{L}_{\mathrm{dist}} +  \Big(\mathcal{L}^1_{\mathrm{FAX}} + \mathcal{L}^1_{\mathrm{stance}}\Big) + \Big(\mathcal{L}^2_{\mathrm{FAX}} + \mathcal{L}^2_{\mathrm{stance}}\Big).$$%, in our experiments we assign equal weights for all the loss components

\paragraph{%Termination condition:
Deployment.}
After training, we deploy the learnt agent% models
s for generating \FAXIC s% for explanation
.  
To determine the number of timesteps in each \FAXIC, we adopt the following strategy.
We analyse cosine similarity between arguments to evaluate the information contributed in each timestep %of the \FAXIC\ 
and we model the gain in information as the average dissimilarity of the contributed argument wrt all the previous arguments.
We terminate the \FAXIC\ if the mean dissimilarity is %$\leq \delta$ 
less then a small amount (which is a parameter) or %in the case of resolution
when the \FAXIC\ is resolved. \todoE{does this mean that we terminate as soon as we resolve? YES IN THE DEPLOYMENT WE CAN ENFORCE IT - LET US CHANGE THE THEORUY AFTER DEADLINE}

\iffalse 

With some exchange termination condition $\mathrm{T}(.)$, the FAX inference algorithm is detailed in Algorithm \ref{alg:VXS}.

\begin{algorithm}[ht]
\caption{Inference steps followed in FAXs} 
\label{algo:VXs}
    \begin{algorithmic}[1]
        \State \textbf{Initialize:} $\zeta, h, \mathcal{M}, \mathfrak{C}, \tilde{q}, q, f, g$
        \State \textbf{Get:} $x \sim \mathcal{X}$
        \State \textbf{Stance assignment:} $z'=f(x); \tilde{y} = g(z')$
        % \State \textbf{Discretization:} $z^1 = \mathfrak{C}_{\tilde{y}^1}(z'); \hat{y} = q(z^1)$
        \State \textbf{Agent States:} $z = \bigcup_{\forall \tilde{y}^i \in \tilde{y}} \mathfrak{C}_{\bar{y}^i}(z') $
        \State \textbf{Prior setting: }
        \For {$i$ in $\{1, \dots, K\}$}
        \For {$z^i_k$ in $z^i$} 
        \State $h^i_0 \leftarrow \zeta^i(h^i_0, \mathcal{M}^i(z^i_k))$
        \State $s^i_k \leftarrow q^i(\zeta^i(h^i_{-1}, \mathcal{M}^i(z^i_k))) - q^i(h^i_{-1})$ 
        \EndFor
        \EndFor
        \State \textbf{Exchanges: }
        \While {$T(.)$}
        \For {$i$ in $\{1, \dots, K\}$}
        \State $\ArgsI_{k} \sim \Pi^i(h^i_{k-1} \mid \bigcup_i^K \ArgsI_{k-1}, \tilde{y}^i)$
        \State $h^i_{k} \leftarrow \zeta^i(h^i_{k-1}, \mathcal{M}^i(\ArgsI_{k}))$
        \State $s \leftarrow q^i(h^i_{k}) - q^i(h^i_{k-1})$
        \State $\BAFi.update(\ArgsI_k)$
        % \State // cross/perceived score assignment...
        \EndFor
        \State $\QBAF.update(\bigcup_{i=1}^K \BAFi)$
        \EndWhile
    \end{algorithmic}
    \label{alg:VXS}
\end{algorithm}
\fi

\section{Evaluation}
\label{sec:eval}

In this section we lay out our approach to evaluating the realization of our \FAXIC s %, as in Section~\ref{sec:set-up}, 
for explaining image classifiers% of the type considered in this paper%, as indicated in Section~\ref{sec:prelim}
.
We 
%define 
use evaluation metrics %and hypotheses 
for assessing 
\iffalse the suitability of %our realisation choices towards faithfulness of the \FAXIC s to the classifier being explained.
the quantized classifier $q$, the codebooks/class-specific features and private classifiers, and their use by our \FAXIC s.
\fi
\emph{faithfulness} and \emph{argumentative quality} of our \FAXIC s.
The metrics are %to be
measured  %globally 
on
a test set $\mathcal{T}\subseteq \mathcal{X}\times \mathcal{Y}$, providing ground-truths (correct classifications) for a number of inputs (we will use concrete instances of $\mathcal{T}$ in our experiments in Section~\ref{sec:results}).

We define the metrics using the same notation  $\mathfrak{C}(z)$ as in Section~\ref{sec:set-up} 
% Here, we abuse the notion of codebook: for and $z=f(x)$, $\mathfrak{C}(z) = \mathfrak{C}_{\bar{y}}(z)$, where $y$ is the top-class predicted by $g(z)$.
as well as 
notations $\mathfrak{C}(h)$
to represent the codebook corresponding to
hidden state representation $h$ (in some sequence model)
and $q^i(h)$ to represent the values assigned by $q^i$ 
to the %set of 
private features/arguments corresponding to
%hidden state representation 
$h$ (in $\zeta^i$).

%To determine whether $q$ works well on $\mathcal{T}$, we consider two metrics. 
The faithfulness metrics are adapted from the literature, and \cite{kori2022explaining} in particular.
The
first metric measures \emph{correctness} of $q$ and of the codebooks
%(wrt the ground truth YES  and the trained feature classifier $g$ NO).
by measuring accuracy %when applied to the codebooks %rather than the outputs of the feature extractor $f$, 
wrt the ground-truth in $\mathcal{T}$:
\vspace*{-0.1cm}
   % \begin{enumerate}[(i)]
    %    \item \emph{Correctness}: 
      \iffalse   \[
             \frac{ \mid \{ %q(f(x)) 
            (x,y) \in \mathcal{T} \mid q(\mathfrak{C}(%f(x)
            z)) = y %, \, \forall  \, x \in \mathcal{X},  y \in \mathcal{Y} \} \mid }{ \mid \mathcal{X} \mid }
             \} \mid }{ \mid \mathcal{T} \mid }
        \]
\fi 
        \[
        \mid \{ %q(f(x)) 
            (x,y) \in \mathcal{T} \mid q(\mathfrak{C}(%f(x)
            z)) = y %, \, \forall  \, x \in \mathcal{X},  y \in \mathcal{Y} \} \mid }{ \mid \mathcal{X} \mid }
             \} \mid / \; { \mid \mathcal{T} \mid }
        \]
%\end{enumerate}

The second metric measures
\emph{completeness} of $q$ on the BAFs resulting from \FAXIC s, by measuring the 
accuracy of $q$ %when applied to 
on
the codebooks corresponding to the hidden state representations of the arguments %adopted by the agents 
%at the end of each \FAXIC
in these BAFs,  
wrt the classifier’s predictions on %the data in 
$\mathcal{T}$:\footnote{With an abuse of notation we use $n$ to indicate the length of every \FAXIC\ obtained from datapoints in $\mathcal{T}$, even though different \FAXIC s will typically have different lengths. %In practice, we use the mean of all the lengths.
}
\iffalse 
%\begin{enumerate}[(ii)]
 %       \item \emph{Completeness}: 
        \begin{align*}
           \frac{ %& 
           \Big| \{ (x, y) \in \mathcal{T} \mid q\left( \mathfrak{C} \left(\cup_{i=1}^N h^i_n \right) \right) = (g \circ f)(x) \} \Big|}{\mid \mathcal{T} \mid}
        \end{align*}
%\end{enumerate}
\fi 
\vspace*{-0.1cm}
%\hspace*{0.6cm} 
\[ 
           \Big| \{ (x, y) \in \mathcal{T} \mid q\left( \mathfrak{C} \left(\cup_{i=1}^N h^i_n \right) \right) = (g \circ f)(x) \} \Big|
           \; /\;  {\mid \mathcal{T} \mid}
           \]
%           \\
This metric gives an indication of the faithfulness to the original classifier of $q$ on the output \FAXIC s (as the loss function used during training  only strives towards faithfulness of $q$ on the input to  \FAXIC s).

\todoE{POSSIBLY AVOID HERE - RAISES MORE QUESTIONS THAN IT ANSWERS....Note that high completeness  is also a possible indicator for lingua franca...as if the agents fail to satisfy it we expect the combined $h^i_n$ to perform poorly... TO BE DISCUSS AFTER}
        
        \iffalse we want faihfulness to the classifier, not the ground truth....
        
        \item Reliability: the accuracy of the FAX framework with respect to the ground truth labels
        \begin{align*}
            \frac{1}{\mid \mathcal{T} \mid} & \Big| \{ q \left( c \left( \cup_{i=0}^K h^i \right) \right) \mid q \left( p\left( \cup_{i=0}^K h^i \right) \right) \\
            & = y, \, \forall \, (x, y) \in \mathcal{T} \}\Big|
        \end{align*}
        \fi

        %We hypothesise that both correctness and completeness need to be high in practice.
        
%%%%%%%%%%%%%%%%%
%%%%%%%%%%%%%%%%%

%\FAXIC s contribute meaningful explanations \todo{the exchange BAFs? the private BAFs? we still need to decide....} as soon as they represent the inner reasoning of the (distilled) classifier. Given that \FAXIC s take out two conflicting viewpoints from the classifier, how do we measure that the dialecticity they exhibit reflects the inner reasoning? \todo{a comment about why this is difficult to measure directly?} We assess dialecticity indirectly as follows: we hypothesise that when the classifier is ``overall certain'' about its predictions then the \FAXIC s have \emph{high certainty} (in favour of the explanandum, see Definition~\ref{def:confidence}) and when the classifier is ``overall uncertain'' about its predictions then the \FAXIC s have \emph{low certainty} (of the explanandum, see Definition~\ref{def:confidence})

The argumentative quality metrics 
are tailored to our (implementation of) \FAXIC s. The third metric measures \emph{consensus} amongst the (two) agents% in \FAXIC s
, in terms of the number of resolved \FAXIC s:
%\vspace*{-0.1cm}
\iffalse 
    %\begin{enumerate}[(iii)]
     %   \item  \emph{Consensus}: 
        \begin{align*}
            \frac{ \Big| \{(x, y) \in \mathcal{T} \mid q^i (h^i_n) = q^j (h^j_n), %\forall 
            j \neq i \}\Big|}{\mid \mathcal{T} \mid}
        \end{align*}
      %  \end{enumerate}
      \fi 

\vspace*{-0.1cm}
%\hspace*{0.8cm}
\[
\Big| \{(x, y) \in \mathcal{T} \mid q^i (h^i_n) = q^j (h^j_n), %\forall 
            j \neq i \}\Big|
            \; / \; {\mid \mathcal{T} \mid}
            \]
     % \\
The fourth (and final) metric (\emph{pro persuasion rate}) measures  consensus again, but towards the proponent agent:   

%\vspace*{-0.1cm}
\iffalse 
       %\begin{enumerate}[(iv)] 
        %\item \emph{Pro persuasion rate}: 
        \begin{align*}
             \frac{ \Big| \{ (x, y) \in \mathcal{T} \mid q^1 (h^1_0) = q^j (h^j_n),  j \neq 1 \}\Big|}{\mid \mathcal{T} \mid} 
        \end{align*}
        %\end{enumerate}
        \fi 

\vspace*{-0.1cm}
%\hspace*{0.8cm}
    \[    \Big| \{ (x, y) \in \mathcal{T} \mid q^1 (h^1_0) = q^j (h^j_n),  j \neq 1 \}\Big|
    \; / \; {\mid \mathcal{T} \mid}
    \]
%      \\  
Given that agents are trained to disagree (see Definition~\ref{def:reward}) both %consensus and pro persuasion rate 
argumentative metrics can be seen as estimates of the goodness of the learnt class-specific features: high %consensus/pro persuasion rate 
values indicate that information is leaked across different features (arguments).  

\iffalse 
       \begin{enumerate}[(v)] 
        \item \emph{Pro resolution rate}: 
        \begin{align*}
             \frac{\Big| \{ (x, y) \in \mathcal{T} \mid q^1 (h^1_0) = q^j (h^j_t), j \neq 1, \; %t > 0 NEW NOTATION 
             t \in ]n] \}\Big|}{|\mathcal{T}|  n} 
        \end{align*}

        % \item Persuasion monotonicity

        % \begin{align*}
        %      \frac{1}{|\mathcal{T}|n} & \Big| \{ (x, y) \! \in \! \mathcal{T} | \mathrm{sign}(q^i (h^i_t) - q^i (h^i_{t-1})) = \\
        %      & \mathrm{sign}(q^j (h^i_t) - q^j (h^i_{t-1})),  j \neq i, \forall t > 0 \}\Big|
        % \end{align*}

        % \begin{align*}
        %    \frac{\sum_{\FAXICs \in \mathcal{E}} \! |\! \{\! \arga \!\!\in\!\! \ArgsI_0 \!\cap\! \ArgsX_n \! \setminus \! \{ \! \arge \!\} | (\!\arga,\! \arge\!) \!\!\in\!\! (\AttsI_n \!\cap\! 
        %    \AttsJ_n ) 
        %    \!\cup\! (\SuppsI_n \!\cap\! 
        %    \SuppsJ_n ) 
        %    )\! \}\! | }
        %    { \sum_{\FAXICs \in \mathcal{E}} | \ArgsI_0 \cap \ArgsX_n \! \setminus \! \{ \! \arge \!\} | } \nonumber
        % \end{align*} 
    \end{enumerate}

\fi

\section{Experiments}
\label{sec:results}

\begin{table}
    \centering
\caption{Accuracies of the trained classifiers} %averaged over three runs
\vspace*{-0.4cm}
\resizebox{1.0\columnwidth}{!}{
        \begin{tabular}{lccc}\\\toprule  
        %\textsc{Models}
        & \textsc{Fair}   & \textsc{Biased} & \textsc{Random} \\\midrule
        \textsc{AFHQ-ResNet-18}      & 0.95  &  0.39 & 0.31 \\ 
        \textsc{AFHQ-DenseNet-121}   & 0.96  &  0.47 & 0.32 \\ \midrule
        \textsc{FFHQ-ResNet-18}      & 0.88  &  0.58 &  0.47 \\ 
        \textsc{FFHQ-DenseNet-121}   & 0.92  &  0.61 &  0.48 \\
        \bottomrule
        \end{tabular}
    }
\label{table:classifier_results}
\end{table}

\begin{table*}
\centering
\caption{%Input-Output faithfulness and completeness 
Faithfulness properties of all considered methods on the DenseNet121 classifiers.}
\label{table:comparative_results}
\vspace*{-0.2cm}
\resizebox{\textwidth}{!}{

\begin{tabular}{c|ccccc|ccccc} 
\toprule 
\multirow{2}{*}{{\begin{tabular}[c]{@{}c@{}}\textsc{Methods} $\rightarrow$ \\ \textsc{Dataset} $\downarrow$\end{tabular}}} 
& \multicolumn{5}{c}{\textbf{\begin{tabular}[c]{@{}c@{}} \textsc{Correctness}\end{tabular}}} 
& \multicolumn{5}{c}{\textbf{\begin{tabular}[c]{@{}c@{}} \textsc{Completeness} \end{tabular}}} \\ 
\cmidrule{2-11} 

& \textsc{GradCAM} & \textsc{DeepLIFT} & \textsc{DeepSHAP} & \textsc{LIME}  & \textsc{FAX}
& \textsc{GradCAM} & \textsc{DeepLIFT} & \textsc{GradSHAP} & \textsc{LIME} & \textsc{FAX} \\ 

\midrule 

\textbf{FFHQ-Random} 

& $0.55$ % \textcolor{gray}{\pm 0.14}$ 
& $0.58$ % \textcolor{gray}{\pm 0.12}$ 
& $0.54$ % \textcolor{gray}{\pm 0.15}$ 
& $0.50$ % \textcolor{gray}{\pm 0.18}$ 
& $0.53$

& $0.40$ % \textcolor{gray}{\pm 0.12}$ 
& $0.41$ % \textcolor{gray}{\pm 0.15}$ 
& $0.38$ % \textcolor{gray}{\pm 0.17}$ 
& $0.36$ % \textcolor{gray}{\pm 0.20}$ \\ 
& $0.97$ 

\\
% \midrule 

\textbf{FFHQ-Biased} 

& $0.35$ % \textcolor{gray}{\pm 0.09}$ 
& $0.37$ % \textcolor{gray}{\pm 0.14}$ 
& $0.39$ % \textcolor{gray}{\pm 0.11}$ 
& $0.50$ % \textcolor{gray}{\pm 0.17}$ 
& $0.91$

& $0.34$ % \textcolor{gray}{\pm 0.08}$ 
& $0.36$ % \textcolor{gray}{\pm 0.13}$ 
& $0.40$ % \textcolor{gray}{\pm 0.15}$ 
& $0.48$ % \textcolor{gray}{\pm 0.18}$ 
& $1.00$

\\ 

% \midrule 

\textbf{FFHQ-Fair} 

& $0.77$ % \textcolor{gray}{\pm 0.09}$ 
& $0.77$ % \textcolor{gray}{\pm 0.11}$ 
& $0.73$ % \textcolor{gray}{\pm 0.13}$ 
& $0.58$ % \textcolor{gray}{\pm 0.17}$ 
& $0.96$

& $0.74$ % \textcolor{gray}{\pm 0.08}$ 
& $0.74$ % \textcolor{gray}{\pm 0.11}$ 
& $0.68$ % \textcolor{gray}{\pm 0.12}$ 
& $0.59$ % \textcolor{gray}{\pm 0.19}$ 
& $0.96$
\\ 
\midrule 

\textbf{AFHQ-Random} 

& $0.30$ % \textcolor{gray}{\pm 0.12}$ 
& $0.32$ % \textcolor{gray}{\pm 0.13}$ 
& $0.28$ % \textcolor{gray}{\pm 0.14}$ 
& $0.25$ % \textcolor{gray}{\pm 0.17}$ 
& $0.55$

& $0.35$ % \textcolor{gray}{\pm 0.12}$ 
& $0.36$ % \textcolor{gray}{\pm 0.14}$ 
& $0.33$ % \textcolor{gray}{\pm 0.16}$ 
& $0.31$ % \textcolor{gray}{\pm 0.19}$ 
& $0.72$

\\ 

% \midrule 

\textbf{AFHQ-Biased} 

& $0.27$ % \textcolor{gray}{\pm 0.10}$ 
& $0.32$ % \textcolor{gray}{\pm 0.12}$ 
& $0.37$ % \textcolor{gray}{\pm 0.15}$ 
& $0.48$ % \textcolor{gray}{\pm 0.18}$ 
& $0.71$

& $0.35$ % \textcolor{gray}{\pm 0.10}$ 
& $0.38$ % \textcolor{gray}{\pm 0.15}$ 
& $0.40$ % \textcolor{gray}{\pm 0.18}$ 
& $0.47$ % \textcolor{gray}{\pm 0.20}$ 
& $0.91$

\\ 

% \midrule 

\textbf{AFHQ-Fair} 

& $0.75$ % \textcolor{gray}{\pm 0.10}$ 
& $0.71$ % \textcolor{gray}{\pm 0.11}$ 
& $0.71$ % \textcolor{gray}{\pm 0.14}$ 
& $0.52$ % \textcolor{gray}{\pm 0.19}$ 
& $0.78$

& $0.70$ % \textcolor{gray}{\pm 0.10}$ 
& $0.66$ % \textcolor{gray}{\pm 0.15}$ 
& $0.67$ % \textcolor{gray}{\pm 0.17}$ 
& $0.54$ % \textcolor{gray}{\pm 0.21}$ 
& 0.99

\\ 

\bottomrule 
\end{tabular}
}
\end{table*}

We analyse \FAXIC s on the high resolution animal and human faces (AFHQ\cite{choi2020stargan}, FFHQ \cite{karras2019style}) datasets, with two well known architectures ResNet-18 \cite{he2016deep} and DenseNet121 \cite{huang2017densely} as image classifiers $g \circ f$.
We %analyse models under 
consider three %different 
settings: (i) \emph{fair}, where the classifier is trained with correct labels; (ii) \emph{biased}, where the classifier is trained with % we consider label bias setting 
biased labels, obtained by randomly switching the labels for 10\% of the datasets%' datapoints
, and (iii) \emph{random}: where we use randomly initialised weights for the classifiers rather than training them% with the data
. 

\begin{figure*}
    \hfill
    \includegraphics[width=.423\textwidth]{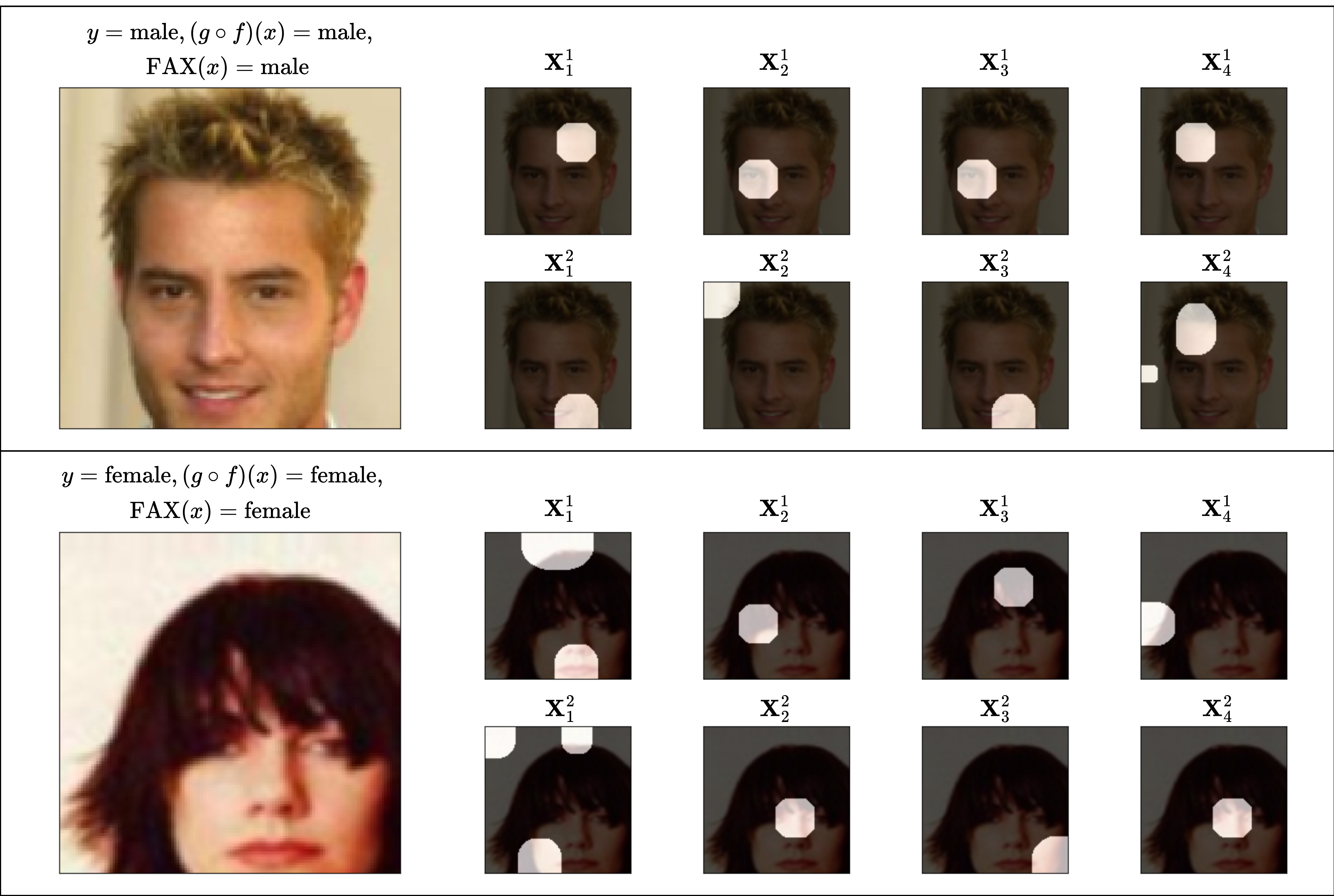} \hfill
    \includegraphics[width=.567\textwidth]{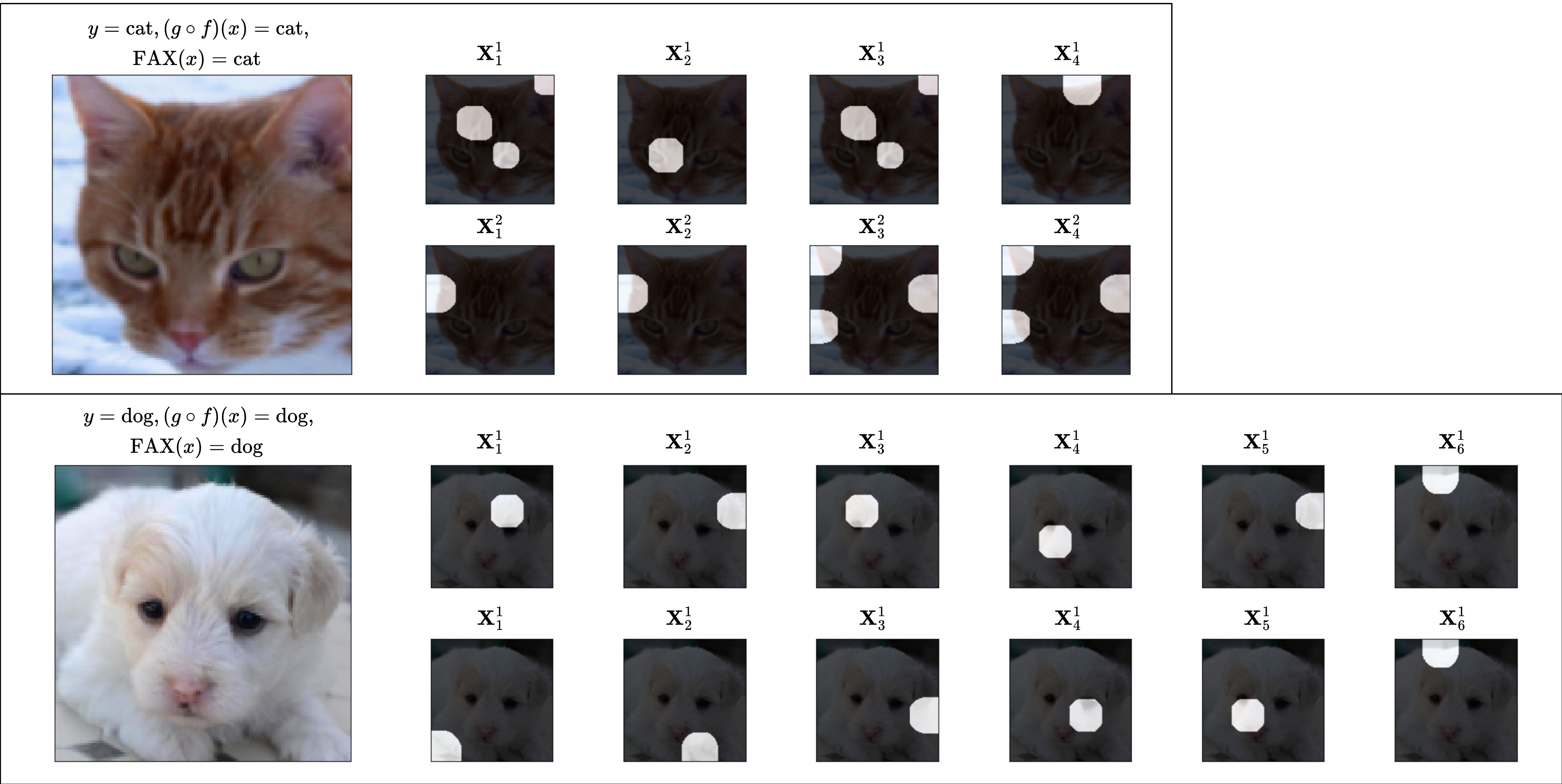} \hfill
    \caption{Arguments %(presented visually) %exchanged in the BAF exchanges
    in \FAXIC s (%where, for each image, 
    the proponent starts with the top left argument, the opponent follows with the argument below it, etc.), for classifiers trained on  FFHQ (left column) and AFHQ (right column)% datasets
    , on fair (top row) and biased (bottom row) %classifier 
    settings. 
    %MAKE SURE COMMENTS ON DIFFERING LENGHT IN TEXT. TOO MUCH REPETITION? TRY AND FIND BETTER EXAMPLES? 
    }
    \label{fig:qualitative results}
    \vspace{-5pt}
\end{figure*}

\paragraph{Qualitative results}
Figure \ref{fig:qualitative results} shows some \FAXIC s visualised as in 
\cite{kori2022explaining}, using the approach in \cite{dissection}%, so that they are comprehensible to humans
.
%, illustrates these visual arguments involved in the exchange BAF, both the figures on left correspond to the classifier trained FFHQ dataset in fair and biased setting respectively, similarly figures on right correspond to the model trained on AFHQ dataset in fair and biased setting respectively.
In all these figures, the first image is the input, while the first and second rows correspond respectively to $\Agents^1$'s and $\Agents^2$'s %visual
arguments. 
%As observed in visual arguments, 
We can see that \FAXIC s %explanations 
focus on different but semantically meaningful regions on the input images, %in the case of 
for both agents.
In the biased setting, as previously described, we expect some leakage of information across different class-specific features/arguments, as observed in 
%case of AFHQ biased example we see the important regions being overlapped across arguments.
the figure.

\paragraph{Quantitative results} Table \ref{table:classifier_results} gives
the classifiers' accuracy on test sets.
For the DenseNet121 classifiers,
Table~\ref{table:comparative_results} gives the faithfulness results  %\footnote{The results for the ResNet-18 are similar.} 
in comparison with standard baseline% method
s (%these are
\CR{i.e.}
GradCAM \cite{selvaraju2017grad}, DeepLIFT \cite{deeplift},  DeepSHAP \cite{SHAP} and LIME \cite{lime}) and Table~\ref{table:densenet_FAX_results} measures the argumentative metrics in the three settings (for \FAXIC s only, as these metrics are not applicable to baseline methods).
Further,
Table \ref{table:resnet_FAX_results} measure all metrics for the esNet-18 classifiers.
We observe that 
%In the case when FAXs faithfully represent the underlying classifier
\emph{completeness} is high in all settings, 
while \emph{correctness} is high for the fair and biased setting\CR{s only}% but low for the %random setting
%other
; this is due to completeness reflecting \FAXIC s  behaviour wrt continuous classifier, while correctness is measured wrt ground truth.
As for the last two metrics, the experiments show higher values in the biased than fair settings, as expected, given that  
 we expect the leak of features across codebooks due to incorrect label assignment in the former, which results in higher consensus and easier persuasion.
 We can observe mixed behaviours %in the case of 
 \CR{with} random classifiers, which results in high value in the case of overlapping features during initialisation and low in the other case.
 % \todo{some comment on random?}
\CR{Overall, the experiments show that for more ``uncertain'' models (random), the agents cannot reach as much consensus as for ``certain'' models (fair and biased) and the persuasion rate is lower for ``uncertain'' models.
%In case of a biased model, given that the classifier model is biased to a particular class, we see higher convergence for both properties. 
This analysis is empowered by the argumentative nature of our explanatory framework.}

\begin{table}[]
    \centering
    \caption{%Exchange Properties 
    Argumentative metrics for %the
    DenseNet-121 classifiers. %AVERAGES INSTEAD
    }
   \vspace*{-0.4cm} \resizebox{1.0\columnwidth}{!}{
        \begin{tabular}{lccc}\\\toprule  
        %\textsc{Models}
        & \textsc{Fair}   & \textsc{Biased} & \textsc{Random} \\\midrule
        %%%%\textbf{\textsc{AFHQ}} \\
        % \textsc{Correctness}            &  0.78  & 0.71   & 0.55  \\
        % \textsc{Completeness}           &  0.99  & 0.91   & 0.72  \\
        % \textsc{Relaibility} \\
        \textbf{AFHQ-Consensus}             &  0.24 &  0.44  &  0.26 \\ 
        \textbf{AFHQ-Pro persuasion rate}   &  0.27 &  0.41  &  0.77 \\
        %\textsc{Pro resolution rate}   &  0.31 &  0.49  &  0.74 \\
        \midrule
        %\textbf{\textsc{FFHQ}} \\
        % \textsc{Correctness}            &  0.96 & 0.91   & 0.53 \\
        % \textsc{Completeness}           &  0.96 & 100.0  & 0.97 \\
        % \textsc{Relaibility}           & \\
        \textbf{FFHQ-Consensus}             & 0.42  & 0.54   & 0.09 \\ 
        \textbf{FFHQ-Pro persuasion rate}   & 0.33  & 0.50   & 0.38\\
        %\textsc{Pro resolution rate}   & 0.24  & 0.26   & 0.84 \\
        \bottomrule
        \end{tabular}
    }
\label{table:densenet_FAX_results}
\end{table}

\begin{table}[]
    \centering
    \caption{%Exchange Properties for the
    %Argumentative 
    All metrics for ResNet-18 classifiers. %POSSIBLY MOVE
    }
    \vspace*{-0.4cm}\resizebox{1.0\columnwidth}{!}{
        \begin{tabular}{lccc}\\\toprule  
        %\textsc{Models}  
        & \textsc{Fair}   & \textsc{Biased} & \textsc{Random} \\\midrule
        %\textbf{\textsc{AFHQ}} \\
        \textbf{AFHQ-Correctness}            &  0.77 & 0.73 & 0.29 \\
        \textbf{AFHQ-Completeness}           &  0.99 & 0.93 & 0.68 \\
        % \textsc{Relaibility} \\
        \textbf{AFHQ-Consensus}             &  0.57 & 0.89 & 0.13 \\ 
        \textbf{AFHQ-Pro persuasion rate}   &  0.48 & 0.56 & 0.16 \\
        %\textsc{Pro resolution rate}   &  0.91 & 0.93 & 0.51 \\
        \midrule
        %\textbf{\textsc{FFHQ}} \\
        \textbf{FFHQ-Correctness}          &  0.65 & 0.45 &  0.51 \\
        \textbf{FFHQ-Completeness}         &  0.99 & 0.99 &  0.91 \\
        % \textsc{Relaibility} \\
        \textbf{FFHQ-Consensus}             &  0.31 & 0.90 &  0.45 \\ 
        \textbf{FFHQ-Pro persuasion rate}   &  0.24 & 0.53 &  0.71 \\
        %\textsc{Pro resolution rate}   &  0.32 & 0.28 &  0.37 \\
        \bottomrule
        \end{tabular}
    }
\label{table:resnet_FAX_results}
\end{table}

\section{%Discussion
Conclusions}

%Our main focus in this paper was on generating highly expressive post-hoc explanations for image classification, pointing towards classifiers' inner reasoning. To achieve this goal, we considered 
We have defined explanations for image classification as (free) argumentative exchanges between two %fictional 
agents, %believing that we can 
aiming to demystify trained image classifiers based on the argument contribution strategies by %all the %players 
the agents. % involved in the exchange.
%Qualitatively, as opposed to 
Differently from standard feature attribution methods generating heatmaps over responsible regions in images, our %proposed explanations 
method generates more fine-grained composition of sub-regions, incrementally.
%\todo{comment on main argumentation takeaways ...}
%
Our work %presents %numerous 
opens 
many opportunities for future work. 
%One intriguing avenue  involves investigating 
We plan to investigate whether \FAXIC s can %effectively 
uncover shortcuts in classifiers% more adeptly than %standard 
%other methods% due to their nuanced nature
. 
Further, it would be valuable to collaborate with domain experts to attribute semantic meaning to %the contributed 
arguments, potentially aiding %in ontology 
alignment between human understanding and the latent knowledge of models.
\iffalse 
Also, given the tractability of our debate framework, particularly in settings with finite actions and %players
agents, \FAXIC s' scalability can be leveraged effectively. 
%As demonstrated in our experiments with the %more complex 
%AFHQ and FFHQ datasets, this scalability suggests promising applications across various domains.
\fi 
Also, it would be %particularly 
interesting to apply our approach in settings where quantized representation learning is already explored, ranging from natural images to medical data \cite{van2017neural,esser2021taming,santhirasekaram2022vector}\CR{, or by targeting other (potentially more complex) architectures, e.g. transformers \cite{Vaswani_17}}.
Methodologically,
we also plan to explore the use of object identification methods such as grounded slot attention \cite{ICLR24},
instead of quantization, 
%REPEATED FEATURES: PROBLEM WITH THE CODEBOOK, NOT THE REWARD THAT PUSHES FOR DIVERSITY...
to improve the human understandability of arguments in our \FAXIC s.
%\todo[inline]{massage rebuttal below}
\CR{Finally, a promising avenue for fully exploiting the capabilities of FAXs amounts to leveraging notions from \cite{Kenny_23,Santhirasekaram_23,Xie_23} to create hierarchical concepts, giving FAXs with more interesting argument interactions.
%including hierarchies as detailed in \cite{Xie_23}, leading to a three level hierarchy with data, feature and the output, potentially giving FAXs with more interesting argument interactions. Another option to achieve this would be to leverage notions from \cite{Kenny_23,Santhirasekaram_23} to create hierarchical concepts.
}

%\todo[inline]{add further future work:}

%\FT{it would definitely be interesting to consider recently proposed transformer-based architectures as a part of future work - we will mention this.}
%%% The acknowledgments section is defined using the "acks" environment
%%% (rather than an unnumbered section). The use of this environment 
%%% ensures the proper identification of the section in the article 
%%% metadata as well as the consistent spelling of the heading.

\begin{acks}
This research was partially supported by the ERC under the EU’s Horizon 2020 research and innovation programme (grant no. 101020934), by J.P. Morgan and the %Royal Academy of Engineering 
RAEng %under the Research Chairs and Senior Research Fellowships scheme 
(grant no. RCSRF2021/11/45) 
and by the UKRI (grant no. EP/S023356/1) via the CDT in Safe and Trusted Artificial Intelligence.
\end{acks}

%%%%%%%%%%%%%%%%%%%%%%%%%%%%%%%%%%%%%%%%%%%%%%%%%%%%%%%%%%%%%%%%%%%%%%%%

%%% The next two lines define, first, the bibliography style to be 
%%% applied, and, second, the bibliography file to be used.

\bibliographystyle{ACM-Reference-Format} 
\bibliography{ref}

%%%%%%%%%%%%%%%%%%%%%%%%%%%%%%%%%%%%%%%%%%%%%%%%%%%%%%%%%%%%%%%%%%%%%%%%

\end{document}